\documentclass[10pt]{article}
\usepackage[T1]{fontenc}
\usepackage{lmodern}
\usepackage[letterpaper,margin=1in]{geometry}
\usepackage{amsmath,amssymb,amsthm}
\usepackage{microtype}
\usepackage{needspace}
\usepackage{graphicx}
\usepackage{tikz}
\usetikzlibrary{arrows.meta}
\usepackage{cite}
\usepackage{xurl}
\usepackage[hidelinks]{hyperref}
\hypersetup{
	pdftitle={VC Dimension of Deterministic and Stochastic Autoregressive Linear Models},
	pdfauthor={Ilan Doron-Arad, Idan Mehalel, Elchanan Mossel}
}

\AtBeginDocument{%
	\setlength{\abovedisplayskip}{6pt plus 2pt minus 1pt}%
	\setlength{\belowdisplayskip}{6pt plus 2pt minus 1pt}%
	\setlength{\abovedisplayshortskip}{3pt plus 1pt}%
	\setlength{\belowdisplayshortskip}{3pt plus 1pt}%
}
\newtheorem{theorem}{Theorem}
\newtheorem{lemma}{Lemma}
\newcommand{\ind}[1]{\mathbf{1}\{#1\}}
\newcommand{\F}{\mathcal F}
\newcommand{\row}{\mathrm{row}}
\newcommand{\test}{\mathrm{test}}
\newcommand{\out}{\mathrm{out}}
\newcommand{\ptr}{\mathrm{ptr}}
\title{On the VC Dimension of Autoregressive Linear Threshold Functions}
\author{
	Ilan Doron-Arad\\
	{\small MIT}\\[-1pt]
	{}
	\and
	Idan Mehalel\\
	{\small The Hebrew University}\\[-1pt]
	{}\\[-1pt]
	{}
	\and
	Elchanan Mossel\\
	{\small MIT}\\[-1pt]
	{}
}
\date{}
\begin{document}
	\maketitle
	\begin{abstract}
Linear threshold functions (LTFs) are a canonical class of Boolean
predictors. In the autoregressive setting, the same LTF reads the most
recent $d$ bits, generates the next bit, and returns the bit produced
after $T$ steps. While the one-step class has VC dimension $d$, for
general $T$ the counting upper bound is $O(d^2)$. We prove that the
VC dimension is $\Theta(d^2)$ for every $T\ge d^2$, resolving the
quadratic gap for deterministic autoregressive LTFs.
We then adapt the same technique to show an exponential separation between deterministic and stochastic autoregressive models: replacing the sign of LTFs by logistic sampling can increase the VC dimension of the final majority prediction class from quadratic to exponential in \(d\). 
\end{abstract}
	
	\section{Introduction}\label{sec:introduction}
	Understanding recursive rules is central in mathematics,
	connecting number theory, probability, and dynamical systems. Known examples include the $3n+1$ problem~\cite{lagarias1985} and random Fibonacci sequences~\cite{viswanath2000}, illustrating how an elementary update rule yields notoriously difficult mathematical questions.
	
	In recent years, understanding recursive formulas has become important in
	large language models (LLMs). As an abstraction, an autoregressive model repeatedly applies
	a fixed next-token generator to the current sequence and appends the
	generated token.
	Intermediate tokens can serve as computational steps~\cite{nye2021,wei2022},
	increasing the computational power of autoregressive models~\cite{malach2024,merrill2024}.
	To study the statistical
	cost of this process, Joshi et al.~\cite{joshi2025} introduced a learning
	framework in which a fixed generator is applied repeatedly for $T$ steps and the
	last token is the answer. Training reveals either only the prompt string and the answer bit
	(\emph{end-to-end supervision (e2e)}), or reveals all intermediate
	tokens (\emph{chain-of-thought supervision (CoT)}). A fundamental question is to understand the statistical complexity of CoT and e2e supervision. 
	
	Perhaps the most fundamental test case for autoregressive learning from the work of Joshi et al.~\cite{joshi2025} onward has been the class of \emph{linear threshold
		functions} (LTFs). In the studied learning problem, an LTF outputs the sign of a weighted
	sum over the Boolean hypercube. In our autoregressive model, the same LTF
	reads the most recent $d$ bits at every step. Its weights and threshold
	never change during generation. Formally, for integers $d,T\ge1$ and $w\in\mathbb Q^d$, define a {\em autoregressive linear threshold function} (LTF)
	$f^T_w:\{0,1\}^d\to\{0,1\}$ of dimension $d$ with $T$ autoregressive steps as follows. Given
	$x=(x_{1-d},\ldots,x_0)$, set $z_t=x_t$ for $1-d\le t\le0$ and
	\begin{equation}\label{eq:model}
		z_t=\mathbf{1}\left\{\sum_{\ell=1}^d w_\ell z_{t-\ell}\ge1\right\}
		\quad(1\le t\le T). 
	\end{equation}
	Here $\ind{A}$ is the indicator function, equal to $1$ when the Boolean expression $A$ is true and $0$ otherwise. Let $f^T_{w}(x)=z_T$ and let
	$\F_{d,T}=\{f^T_{w}:w\in\mathbb Q^d\}$ be the class of all autoregressive LTFs with rational weights. Standard LTFs are obtained from this class by setting $T=1$. 

	For this class, a large gap remained in the
	VC dimension, which measures how many inputs can receive arbitrary
	labels and controls sample complexity in distribution-free
	learning~\cite{blumer1989}.
	Formally, a set $A\subseteq\{0,1\}^d$ is \emph{shattered} by
	$\F_{d,T}$ if, for every map $\lambda:A\to\{0,1\}$,
	there exists $f\in\F_{d,T}$ with $f(x)=\lambda(x)$ for all $x\in A$.
	The VC dimension is the maximum size of a set shattered by $\F_{d,T}$.
	For $T=1$, it is well known that $\operatorname{VCdim}(\F_{d,1})=d$ from~\cite{cover1965},
	even though the number of LTFs over the boolean cube is
	$2^{\Theta(d^2)}$~\cite{zuev1989}, and the VC dimension of a class may grow logarithmically with its size. 
	Since every LTF uniquely determines its autoregressive
	trajectory, the counting bound
	$\operatorname{VCdim}(\F_{d,T}) = O(d^2)$
	holds for every $T$. As for lower bounds, \cite{doron2026online} proved that $\operatorname{VCdim}(\F_{d,T}) = \Omega(d)$ for sufficiently large $T$. However,
    whether the VC dimension is of order $d$, $d^2$, or anything in between remained open, and was asked explicitly in~\cite{joshi2025,hanneke2026,doron2026online}.
	Thus, it remained unclear how autoregressive steps affect the VC dimension of LTFs which directly implies the sample complexity with e2e supervision.
	In contrast, it is known that with CoT supervision the sample complexity depends linearly on $d$. The upper bound was proved in~\cite{hanneke2026, li2026}, and the lower bound in~\cite{doron2026online}.

	\paragraph{Our results.}
	
	Our first main result is that, for $T\ge d^2$, the VC dimension of autoregressive LTFs is $\Omega(d^2)$, substantially larger than the $\Theta(d)$ VC dimension of the base (non-autoregressive) LTF class.
	
	\begin{theorem}\label{thm:main}
		For all integers $d,T\ge1$ with $T\ge d^2$,
		$\operatorname{VCdim}(\F_{d,T})\ge d^2/3024^2$.
	\end{theorem}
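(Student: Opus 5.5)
We will exhibit an explicit set $A\subseteq\{0,1\}^d$ of size $m=\Omega(d^2)$ and show that every labelling $\lambda:A\to\{0,1\}$ is realized by some $f^T_w$. The plan is to build the weight vector $w$ from blocks, each with its own role, and treat the autoregressive dynamics as a small computer. A short prefix of the input selects which element of $A$ we are looking at. The weights encode a $\Theta(d^2)$-bit table $\lambda$. The trajectory then reads the entry of the table at the selected address and carries it to step $T$.

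Counting dictates the shape of the encoding. A single coordinate $w_\ell$ must carry $\Theta(d)$ bits of $\lambda$. So we split $\{1,\dots,d\}$ into $k=\Theta(d)$ \emph{data} coordinates plus $O(d)$ \emph{control} coordinates. Data weight $w_\ell$ is a rational of the form $\sum_{j} \lambda_{\ell,j}2^{-cj}$ (plus a base offset), with $j=1,\dots,k$, so it stores the labels $\lambda_{\ell,1},\dots,\lambda_{\ell,k}$ of the $k$ inputs assigned to column $\ell$.

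An input $x_{\ell,j}\in A$ has two parts. One part is a one-hot pattern that places a single $1$ at lag $\ell$ of the window, at the moment when the data coordinates are read. The other part is a unary or binary encoding of $j$ in a control region. The dynamics must then extract bit $j$ of $w_\ell$. The idea is to use the control coordinates to implement an iterated comparison. Over successive steps the window contains a ``counter'' pattern of $1$'s on control lags. Their fixed weights add a threshold shift, which depends on how many rounds have elapsed, to the single selected data weight $w_\ell$. The generated bit records whether $w_\ell$ minus the current accumulated shift is at least $1$. This is essentially binary-search/bit-extraction by repeated thresholding, similar to the standard $\Omega(d^2)$-type constructions for depth/recurrence.

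The pieces will be combined in the following order.

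\begin{enumerate}
\item Design a \emph{clock/shift} gadget. On control coordinates it produces a periodic or self-propagating pattern, for example a single $1$ travelling through the window. This gives a deterministic schedule that lasts $\Theta(d)$ steps per phase.
\item Design a \emph{pointer} gadget. The encoding of $j$ in the input determines at which phase the bit is sampled.
\item Design a \emph{readout}. The sampled bit $\lambda_{\ell,j}$ is latched, for instance in a self-reinforcing pattern whose weights make a $1$ persist. It is copied forward so that $z_T=\lambda(x_{\ell,j})$. The total number of steps is $O(d^2)$, and the condition $T\ge d^2$ gives enough room after padding with idle steps.
\end{enumerate}

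Throughout, weights are scaled by a common factor so that the data contributions live at a finer scale than the control contributions. This keeps the two from interfering. The constant $3024^2$ suggests that about $d/3024$ coordinates are data and about $d/3024$ are addresses, and the rest is slack for the gadgets.

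The main obstacle is the bit-extraction step. A single threshold comparison against $w_\ell$ reveals only whether $w_\ell$ exceeds a fixed level, not an arbitrary bit. Since $w$ cannot change, extracting bit $j$ requires the trajectory to ``subtract off'' the higher bits, and those depend on $\lambda$ itself. My first attempt is to feed the previously generated bits back in. The last $r$ generated bits sit in the window at fixed lags. If we give those lags weights $-2^{-c i}$, the recursion computes $w_\ell$'s binary expansion digit by digit, with $z_t$ equal to the next bit as in a greedy expansion. The bit at the phase dictated by $j$ is then captured by the pointer gadget.

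This bit-by-bit computation must leave the control and selection patterns intact. Two things need care:
\begin{itemize}
\item the lag bookkeeping, so that each generated digit is reused with the correct weight as it shifts through the window;
\item the margins, so that every threshold is strictly separated.
\end{itemize}
I expect most of the work to be in verifying these.
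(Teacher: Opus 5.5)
Your encoding of the labels (about $d$ weights, each carrying about $d$ label bits) matches the paper's. The step you flag as the main obstacle is where the proposal breaks. ``Lag bookkeeping'' is not a detail to verify later; it is the whole difficulty.

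In the window every bit moves one lag per step, so a fixed coefficient multiplies a bit according to its age, not its role. Suppose digit $z^{(k)}$ is generated at digit-step $k$. When digit $n$ is computed, $z^{(k)}$ sits at lag $n-k$, so your fixed weights $-2^{-ci}$ at lag $i$ give it weight $-2^{-c(n-k)}$. Greedy expansion needs weight $-2^{-ck}$, and also a threshold $2^{-cn}$ that moves with $n$. Multiplying through by $2^{cn}$ does make the feedback weights $-2^{cL}$ lag-invariant. But then the selected datum must enter with coefficient $2^{cn}w_\ell$, which grows with $n$, and fixed weights cannot supply this for $\Theta(d)$ data coordinates at once.

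The selector has the same defect. A single $1$ placed at lag $\ell$ of the prompt meets $w_\ell$ exactly once and then slides across the other data and control coefficients. So it cannot feed $w_\ell$ into the $\Theta(d)$ successive comparisons that digit extraction needs, and it corrupts the other gadgets in the meantime. The clock, pointer and latch gadgets are likewise only named, so no actual $w$ is specified.

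The missing idea is to store the state at fixed positions of a periodically regenerated block, rather than in the age of bits.

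\begin{itemize}
\item The paper first writes a several-gate recursion. It has a one-hot row selector, a test bit, output bits holding the recovered digits at fixed significance, and a one-hot pointer contributing the current threshold $2^{k-1}$. The test $\theta_i-\sum_j2^{j-1}(f_{\mathrm{out},j}+f_{\mathrm{ptr},j})\ge0$ then has constant coefficients.
\item It rewrites this state every $p=378(m+2)$ steps, with each bit at a fixed in-block position. Hence each bit of the previous block is seen at the constant lag $p+\pi(v)-\pi(u)$, and an earlier bit of the current block at lag $\pi(v)-\pi(u)$. In particular, $4\theta_j$ is simply the coefficient at lag $p+\pi(\mathrm{test})-\pi(\mathrm{row},j)$.
\item Positions are chosen with distinct residues modulo $63$, so different update rules never share a lag.
\item Test, output and pointer bits are stored twice, with opposite-sign companion coefficients, so the second copy is a pure copy.
\item Coefficients of order $B$ and $-B^2$ force the block marker and the zeros at unused positions.
\item Column selection needs no pointer encoded in the prompt. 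All $m$ digits are computed and retained. The prompt for entry $(i,s)$ is a window of the label-independent string $0^{d-p}E_i^0E_i^1$, shifted by $r_s$ so that step $T$ lands on position $\pi(\mathrm{out},s)$ of a late block.
\end{itemize}

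Some construction of this kind is needed before your outline yields the bound $d^2/3024^2$. You would also need the explicit choice of $m$ and the separate trivial case of small $d$.
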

	
	The weights used in the proof are rational numbers of polynomial bit length.
	Thus, a single LTF,
	reused without changing its parameters, can increase VC dimension
	from linear to quadratic. In particular, at fixed accuracy
	and confidence, e2e learning requires $\Theta(d^2)$ examples,
	whereas the
    analogue CoT-supervision bound is $\Theta(d)$.
    This resolves the question of
	statistical separation left open in~\cite{joshi2025}. 

\paragraph{Implication for stochastic autoregressive learning.}
Our second result concerns stochastic autoregressive learning, replacing deterministic generation by a distribution of the next token at each step. We consider this as the canonical study case for the stochastic setting~\cite{doron2026stochastic}. 
 Let $\sigma(u)=\frac1{1+e^{-u}}$ be the sigmoid (or logistic) function. Here, we replace the
threshold in \eqref{eq:model} by an affine logistic probability:
for $w\in\mathbb R^d$, $b\in\mathbb R$, and a prompt
$x=(x_{1-d},\ldots,x_0)$, initialize $Z_t=x_t$ for $1-d\le t\le0$ and sample the next token $Z_t$ by the following distribution: 
\begin{equation}\label{eq:stochastic-model}
 \Pr(Z_t=1\mid Z_{1-d},\ldots,Z_{t-1})
 =\sigma\left(b+\sum_{\ell=1}^d w_\ell Z_{t-\ell}\right),
 \qquad \quad(1\le t\le T).
\end{equation}
Define the final
probability by $q^{(T)}_{w,b}(x)=\Pr(Z_T=1\mid x)$ and the majority class by
\begin{equation}\label{eq:stochastic-classes}
\begin{aligned}
 \mathcal H_{d,T}
 &=\left\{x\mapsto \mathbf{1}\left\{q^{(T)}_{w,b}(x)>1/2\right\}:
       w\in\mathbb R^d,\ b\in\mathbb R\right\}.
\end{aligned}
\end{equation}
Thus, $\mathcal H_{d,T}$ consists of the deterministic majority
predictions of the stochastic generators. We prove that even in this simple stochastic setting shattering exponential many points is possible, though with small margin. 

\begin{theorem}\label{thm:stochastic}
There are constants $c,C>0$ and $d_0$ such that for every $d\ge d_0$
there is an integer $T\le2^{2^{Cd}}$ for which $\operatorname{VCdim}(\mathcal H_{d,T})\ge2^{cd}.$ 
\end{theorem}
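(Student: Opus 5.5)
We prove Theorem~\ref{thm:stochastic} by realizing a stochastic \emph{counter} or \emph{clock} with exponentially many distinguishable states. The final majority bit then depends on which of exponentially many ``phases'' the process is in at time $T$. Thus the role of the quadratic-size pointer mechanism in the proof of Theorem~\ref{thm:main} is taken over by the stationary behaviour of a Markov chain on $\{0,1\}^d$.

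The key observation is that the window $(Z_{t-d+1},\ldots,Z_t)$ is a Markov chain on $\{0,1\}^d$ with all transition probabilities strictly in $(0,1)$. It is therefore ergodic, so $q^{(T)}_{w,b}(x)$ converges to the stationary probability of a $1$, independently of $x$. This rules out simply taking $T$ huge. Instead we exploit the \emph{transient} regime, which can be made extremely long: there are weights with polynomial bit length in which some transitions have probability $e^{-\Theta(\text{weight})}$. With large weights ($|w_\ell|, |b|$ of order $2^{Cd}$) the chain is almost deterministic, and deviations occur at rates like $e^{-2^{Cd}}$. This is why $T\le 2^{2^{Cd}}$.

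\textbf{Step 1 (shattered set).} Choose a family $A=\{x^{(1)},\ldots,x^{(N)}\}$ with $N=2^{cd}$, for example codewords of a constant-weight code on $d/2$ coordinates padded with a fixed marker pattern. Fix a labeling $\lambda$.

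\textbf{Step 2 (near-deterministic dynamics).} Build a near-deterministic LTF dynamics in which prompt $x^{(i)}$ enters a nearly absorbing ``trap'' cycle $C_i$. Since a single LTF cannot hold $2^{cd}$ arbitrary long cycles, we use a different mechanism. The prompt determines an \emph{escape rate} $\epsilon_i$: the rate at which the trajectory leaves its initial metastable configuration and falls into a common absorbing-like region whose output bit is biased the other way. We want $\epsilon_i=\exp(-\langle w,x^{(i)}\rangle)$-type quantities. After $T$ steps the probability of still being in the initial configuration is about $(1-\epsilon_i)^T\approx \exp(-T\epsilon_i)$.

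\textbf{Step 3 (encoding $\lambda$).} Make the answer bit equal $1$ iff an observable defined on top of this survival probability exceeds $1/2$, for instance by adding a second, fast-oscillating component whose phase depends on survival. Now choose $w$ so that the values $\log(1/\epsilon_i)$ encode $\lambda$. The cleanest version: arrange the thresholds $\log T$ relative to the $\log(1/\epsilon_i)$ so that the labels are read off as the binary digits of a single large parameter. This mirrors the classical $\sin(\alpha x)$ construction of infinite VC dimension for one real parameter. Concretely, let a large weight $\alpha$ play the role of the frequency, and let the prompt select which ``digit'' of $\alpha$ is read through the parity of the number of rare events. The majority then follows the parity of a Poisson-like count with mean $\mu_i=T\epsilon_i(\alpha)$, and
\[
\Pr(\text{even})=\tfrac12\left(1+e^{-2\mu_i}\right).
\]

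\textbf{Main obstacle.} The hard step is Step 3. The plan above is a template; the actual mechanism must satisfy two competing requirements:
\begin{itemize}
\item A single logistic LTF must produce $2^{cd}$ well-separated effective rates $\epsilon_i$. Such rates are achievable, since affine forms on constant-weight codewords can take exponentially separated values.
\item A labeling-dependent choice of $(w,b)$ must flip each majority independently, while all sub-threshold error probabilities stay below the margin. Here the sigmoid tails $e^{-|u|}$ must be controlled uniformly over doubly-exponential horizons.
\end{itemize}
I would first reuse the pointer/test gadget from the proof of Theorem~\ref{thm:main} to implement the parity readout, and then verify the margins by a union bound over rare transitions.
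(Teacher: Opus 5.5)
\textbf{There is a genuine gap: you never construct the decisive step, and the one concrete readout you commit to in Step~3 cannot work.} For a Poisson count, $\Pr(\text{even})=\tfrac12(1+e^{-2\mu_i})$. For $K$ i.i.d.\ Bernoulli$(p)$ samples it is $\tfrac12(1+(1-2p)^K)$. Both are strictly above $\tfrac12$ for every small rate, so every prompt gets the same majority bit however $\alpha$ is chosen.

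More generally, a mod-$2$ counter has the real character $-1$. The quantity $\mathbb E[(-1)^{S_K}]=(1-2p_x)^K$ has a sign that is monotone in $p_x$, hence in the affine score of $x$. So the majority is a threshold of an affine form and shatters at most $d+1$ points. The survival probability $e^{-T\epsilon_i}$ of Step~2 has the same monotonicity defect. The ``fast-oscillating component'' you mention is exactly the missing ingredient, and parity cannot supply it.

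\textbf{The paper's fix is to count the sampled ones modulo $4$ and output $1$ on residues $\{0,1\}$.} Because the character $\mathrm i$ is non-real,
\[
\Pr(S_K\bmod4\in\{0,1\})-\tfrac12=\tfrac12\operatorname{Re}\bigl((1-\mathrm i)(1-p_x+\mathrm i p_x)^K\bigr)=\tfrac1{\sqrt2}\Bigl(\tfrac{\sqrt{1+a_x^2}}{1+a_x}\Bigr)^K\cos(K\arctan a_x-\pi/4),
\]
and its sign genuinely oscillates as the odds $a_x$ grow. Take $a_x=\frac{2\pi\alpha}{K}4^{j(x)}$, that is, $p_x=\sigma(\beta+(\log4)j(x))$, which is affine in the address bits. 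Then $K\arctan a_x\approx2\pi4^{j(x)}\alpha$, so the sign reads the $j(x)$-th base-$4$ digit of $\alpha=4^{-2N}+\sum_j2(1-\lambda_j)4^{-(j+1)}$.

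With this readout you need neither a constant-weight code nor a metastable trap. All $2^m$ addresses are used. Each round simply resamples an independent test bit from the copied address bits and advances a two-bit mod-$4$ counter using threshold gates of scale $G$. Your closing idea of reusing the block construction with a union bound over gate errors does match how the paper implements this.

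Your heuristic about scales does not match this scheme, however. The margin is $\delta_N=\tfrac14e^{-2\cdot4^N}$ with $N=2^m$. The union bound over $pK$ gates therefore forces $G=2\log(2pK/\delta_N)$, which is doubly exponential in $m$; weights of size $2^{Cd}$ would not suffice. The horizon $T<pK$, with $K=4^{2N}$, is doubly exponential because the phases $Ka_x$ must reach order $4^N$ while the per-sample odds stay small. It is not a consequence of rare-event rates coming from weights of size $2^{Cd}$.
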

Recall that the \emph{pseudo-dimension} of a real valued function
class $\mathcal Q$ is the largest $n$ for which there exist inputs
$x_1,\ldots,x_n$ and fixed thresholds $r_1,\ldots,r_n$ such that,
for every labeling $\lambda\in\{0,1\}^n$, some $q\in\mathcal Q$
satisfies $\mathbf{1}\{q(x_i)>r_i\}=\lambda_i$ for all $i\in[n]$.
In particular, Theorem~\ref{thm:stochastic} shows that the class
$\{q^{(T)}_{w,b}:w\in\mathbb R^d,\ b\in\mathbb R\}$
has pseudo-dimension $2^{\Omega(d)}$ for some $T$, even with
all thresholds fixed at $r_i=1/2$, in stark contrast to the
$O(d^2)$ bound in the deterministic setting. Previous work show qualitative differences between deterministic
and stochastic autoregressive learning~\cite{doron2026online,doron2026stochastic},
but the separating constructions relied on specially designed next-token
distributions. In contrast, our exponential separation holds even for
the canonical logistic model, where each next token probability is a
monotone function of an affine linear score. 


	\paragraph{Further related work.}
	Joshi et al.~\cite{joshi2025} also establish a computational
	separation between e2e and CoT supervision: autoregressive LTFs are efficiently learnable with CoT
	supervision, whereas e2e learning is hard under cryptographic
	assumptions. Magen, Joshi, and Vardi~\cite{magen2026} show that a
	single autoregressive threshold neuron can simulate threshold
	circuits with polynomial overhead, after a fixed encoding of the
	inputs. Joshi et al.~\cite{joshi2026thinkers} study learning from
	multiple thinkers and show that CoT supervision from even a small
	number of thinkers can make learning computationally hard.
	The autoregressive learning framework has also been extended to
	online prediction~\cite{doron2026online} and stochastic next-token
	generation~\cite{doron2026stochastic}. Li~\cite{li2026} obtains optimal realizable PAC
	bounds for learning complete CoT trajectories over multiclass
	alphabets, independently of trajectory length.

	\subsection{Overview of the techniques}\label{sec:overview}
	\subsubsection*{Theorem~\ref{thm:main}: deterministic generators}
	Our goal is to shatter $\Theta(d^2)$ points by a class of autoregressive LTFs. We arrange the labels we wish to realize by this class as an $m\times m$ binary matrix
	$\lambda$, where $m=\Theta(d)$. Each input will specify one row $i$ and
	one entry $s$ in that row. We encode row $i$ as the integer
	\[
	\theta_i=\sum_{s=1}^m2^{s-1}\lambda_{i,s}.
	\]
	Thus, $m$ numbers, one number for each row, contain the information on all $m^2$ labels. We place these numbers in
	selected weights and construct fixed inputs whose final outputs are
	the requested binary digits. Encoding labels in weights and extracting
	their digits is a familiar approach in neural network VC lower
	bounds~\cite{bartlett2019}. Here, however, the challenge is to perform the entire extraction with a single fixed LTF, while storing the current state of the recursion in only $O(m)=O(d)$ previously generated bits (the {\em memory}). We accomplish this by first constructing a recursive formula shattering sufficiently many points with several LTFs per round. Then, we explain how to reduce this formula to a single LTF, and finally, explain how to choose the input points independent of the labels. 
	
	\paragraph{Shattering with several LTFs.}
	We recover the entries of a row using several LTFs, allowing first $O(m)$ LTF computations in each autoregressive round $t$. The
	recursion keeps four groups of bits: {\em row} bits used to select the row $i$ corresponding to the number $\theta_i$;
	One {\em test} bit computing a linear threshold test;
	{\em output} bits storing the digits already recovered, and
	{\em pointer} bits selecting the next digit to be discovered. Starting with zero output,
	we recover the digits from largest to smallest. To recover digit
	$s$, the test checks whether $\theta_i$ minus the value represented
	by the current output is at least $2^{s-1}$. Its answer is exactly the label
	$\lambda_{i,s}$. The corresponding output bit is then updated and
	the pointer moves to the next digit; the other output bits and the
	selected row remain unchanged. Every operation is an LTF computation,
	including the conditional update of an output bit. After $O(m)$
	rounds, all entries of the selected row have been recovered and remain
	stored.
	
	\paragraph{Implementing each round with one LTF.}
	We encode the entire state of round $t$ of the above recursion in a {\em block} $E^t$ of $O(m)$
	bits. Each block contains a fixed initial $1$ marking the beginning of a block, designated positions
	for the four groups of bits (row, test, output, pointer), and zeros elsewhere. The same LTF
	generates the next block $E^{t+1}$ based on the current block $E^t$, one bit at a time. Within each new block $E^{t+1}$, the test bit is generated before the output bits, allowing the output bits to depend on both the new test bit and the previous block $E^t$. To implement all updates with the same LTF, we carefully choose the {\em positions} of the bits within each block. Using modular arithmetic, we ensure that coefficients intended for one update do not interfere with others. Each test, output, and pointer bit is stored in two consecutive positions. The LTF computes the bit at the first position, while at the second bit, pairs of opposite weights cancel the contributions of earlier bits, allowing the LTF to {\em copy} the bit just generated. Consequently, one fixed LTF transforms $E^t$ into $E^{t+1}$ using $O(m)$ generated bits per round. The full procedure requires $\Theta(m)$ blocks and therefore takes $\Theta(m^2) = \Theta(d^2)$ autoregressive steps.
	
	\paragraph{Choosing the inputs.}
	Recovering a row is not yet enough: all $m^2$ inputs must be fixed
	before the labels are chosen, and every requested entry must be
	returned at the same time $T$. For each row, concatenate enough
	initial zeros with its first two blocks $E^0,E^1$, both of which
	are independent of the labels. Choose the input for entry $s$ to be a window
	of $d$ consecutive bits in this string, shifted by an amount $r_s$.
	We choose the shift $r_s$ so that time $T$ falls at the output position
	storing entry $s$ of a later block. Once the required digit is extracted,
	the output bits remain fixed across blocks, so this works for every
	sufficiently large $T$. The resulting $m^2$ inputs
	realize the arbitrary matrix $\lambda$ under one common weight
	vector, proving the quadratic lower bound. 
	As a sanity check, considering the $O(d^2)$ upper bound, note that our construction indeed cannot shatter more than $O(d^2)$ inputs, since selecting among $m$ rows and storing their $m$ recovered digits requires $\Theta(m)$ bits in the $d$-bit history, limiting $m$ to $O(d)$.

\subsubsection*{Theorem~\ref{thm:stochastic}: stochastic generators}

Unlike the deterministic construction, we encode exponentially many
labels in a single real number $\alpha\in(0,1)$, using its base-$4$
digits. Each $m$-bit input selects one of $N=2^m$ digits.
To recover the required bit, we repeatedly sample a Bernoulli bit whose odds
are proportional to $4^j\alpha$, where $j$ is the selected digit's index.
These odds are implemented by a logistic gate with an affine score.
The key observation is that, although the sampling probability
is monotone, the probability that the number of sampled ones
is $0$ or $1$ modulo $4$ oscillates around $1/2$.
With sufficiently many samples, the sign of this difference
reveals the selected digit, and hence its label.
We implement this experiment with a single logistic generator
by adapting the deterministic block construction, using
$O(m)$ memory to shatter $2^m$ inputs.

	\paragraph{Organization.}
	Section~\ref{sec:several} gives the recursion using
	several LTFs, Section~\ref{sec:single} implements it with one LTF,
	and Section~\ref{sec:inputs} chooses the inputs and proves
	Theorem~\ref{thm:main}. Sections~\ref{sec:stochastic-digits}--\ref{sec:stochastic-inputs}
	give the stochastic digit reader, implement it with one logistic
	generator, and prove Theorem~\ref{thm:stochastic}.

	\section{Shattering with Several LTFs}\label{sec:several}
	
	In this section, we shatter $\Theta(m^2) = \Theta(d^2)$ inputs using $O(m)$ LTFs updated in each round $t$.  Then, in the next section we show how to reduce them to a single more involved LTF. 
	Fix $m\ge2$ and an arbitrary labeling $\lambda\in\{0,1\}^{m\times m}$.
	Write $[m]=\{1,\ldots,m\}$. Define the {\em encoding of row} $i$ by
	\begin{equation*}
		\theta_i=\sum_{j=1}^m2^{j-1}\lambda_{i,j}
		\qquad(i\in[m]).
	\end{equation*}
	Let $\theta=(\theta_1,\ldots,\theta_m)$ be the {\em encoding} of $\lambda$. We will compute the
	binary digits of $\theta_i$ from the largest to the smallest using a
	recursive formula.
	For the moment, we allow several simple LTF computations in each round $t$.
	Specifically, for each selected row $i\in[m]$ and every $t \ge 0$, we define $3m+2$ bits $f^t_{\theta,i}\in\{0,1\}^{3m+2}$ interpreted as follows, by the displayed order.  
	
	\begin{itemize}
		\item $f^t_{\theta,i,\row,j}$ for $j\in[m]$. These row bits select the encoded row. 
		\item $f^t_{\theta,i,\test}$. This
		test bit compares the encoded row with the current output and pointer values.
		\item $f^t_{\theta,i,\out,j}$ for $j\in[m]$. These output bits store the recovered digits of the selected row. 
		\item $f^t_{\theta,i,\ptr,j}$ for $j\in[m+1]$. These pointer bits will specify the next digit. 
	\end{itemize}
	
	We now define $f^t_{\theta,i}\in\{0,1\}^{3m+2}$ formally. For each $i\in[m]$, initialize the values at $t=0$ by
	\begin{equation}\label{eq:init}
		\begin{aligned}
			f^0_{\theta,i,\row,j}&=\ind{j=i},&
			f^0_{\theta,i,\test}&=f^0_{\theta,i,\ptr,m+1}=1, &
			f^0_{\theta,i,\out,j}&=f^0_{\theta,i,\ptr,j}=0
			\quad(j\in[m]).
		\end{aligned}
	\end{equation}
	For each $i\in[m]$ and $t\ge0$, define $f^{t+1}_{\theta,i}$ from $f^t_{\theta,i}$ by
	\begin{align}
		&f^{t+1}_{\theta,i,\row,j}
		=f^t_{\theta,i,\row,j} &&(j\in[m]),\label{eq:row}\\
		&f^{t+1}_{\theta,i,\test}
		=\mathbf{1}\left\{\sum_{j=1}^m\theta_jf^t_{\theta,i,\row,j}
		-\sum_{j=1}^m2^{j-1}
		\bigl(f^t_{\theta,i,\out,j}+f^t_{\theta,i,\ptr,j}\bigr)\ge0\right\},
		\label{eq:test}\\
		&f^{t+1}_{\theta,i,\out,j}
		=\mathbf{1}\left\{2f^t_{\theta,i,\out,j}+f^t_{\theta,i,\ptr,j}
		+f^{t+1}_{\theta,i,\test}\ge2\right\}
		&&(j\in[m]),\label{eq:out}\\
		&f^{t+1}_{\theta,i,\ptr,j}
		=f^t_{\theta,i,\ptr,j+1} &&(j\in[m]),\label{eq:ptr}\\
		&f^{t+1}_{\theta,i,\ptr,m+1}=0.\label{eq:last}
	\end{align}
	The test bit $f^{t+1}_{\theta,i,\test}$ is evaluated before the output bits $f^{t+1}_{\theta,i,\out,j}$; the output bits can
	then be evaluated in parallel. Note that the above equations can all be implemented as LTFs. In particular, copying a bit and producing the
	constant zero are LTF operations. Thus, these equations use only
	LTF computations. An illustration is given in Figure~\ref{fig:recursion-transition}. 
	The next lemma shows that this recursion recovers every entry of the arbitrary labeling $\lambda$. 
	\begin{figure}[t]
		\centering
		\begingroup
		\definecolor{transitionrow}{HTML}{287A70}
		\definecolor{transitiontest}{HTML}{A66A15}
		\definecolor{transitionout}{HTML}{356BA3}
		\definecolor{transitionptr}{HTML}{795A92}
		\definecolor{transitionink}{HTML}{333840}
		\resizebox{.96\linewidth}{!}{%
			\begin{tikzpicture}[
				x=1cm,y=1cm,
				font=\small,
				text=transitionink,
				every node/.style={inner sep=0pt},
				arrow/.style={-{Stealth[length=1.8mm,width=1.3mm]},line width=.7pt}
				]
				\foreach \start/\count/\c in {
					0/5/transitionrow,
					3.54/1/transitiontest,
					4.52/5/transitionout,
					8.06/6/transitionptr}{
					\foreach \y in {0,-1.80}{
						\foreach \j in {1,...,\count}{
							\pgfmathsetmacro{\leftedge}{\start+.64*(\j-1)}
							\filldraw[fill=\c!7,draw=\c!25,line width=.35pt]
							(\leftedge,\y-.30) rectangle ++(.64,.60);
						}
					}
				}
				\filldraw[fill=transitiontest!21,draw=transitiontest!65,line width=.55pt]
				(3.54,-2.10) rectangle (4.18,-1.50);
				\filldraw[fill=transitionout!21,draw=transitionout!65,line width=.55pt]
				(5.80,-2.10) rectangle (6.44,-1.50);
				\filldraw[fill=transitionptr!21,draw=transitionptr!65,line width=.55pt]
				(9.34,-.30) rectangle (9.98,.30);
				\filldraw[fill=transitionptr!21,draw=transitionptr!65,line width=.55pt]
				(8.70,-2.10) rectangle (9.34,-1.50);
				
				\node[text=transitionrow] at (1.60,.70) {\textsf{Row}};
				\node[text=transitiontest] at (3.86,.70) {\textsf{Test}};
				\node[text=transitionout] at (6.12,.70) {\textsf{Output}};
				\node[text=transitionptr] at (9.98,.70) {\textsf{Pointer}};
				
				\node[anchor=east] at (-.28,0) {$f^t_{\theta,i=2}={}$};
				\node[anchor=east] at (-.28,-1.80) {$f^{t+1}_{\theta,i=2}={}$};
				\foreach \j/\b in {1/0,2/1,3/0,4/0,5/0}{
					\node[font=\normalsize] at ({.64*(\j-.5)},0) {$\b$};
					\node[font=\normalsize] at ({.64*(\j-.5)},-1.80) {$\b$};
				}
				\node[font=\normalsize] at (3.86,0) {$0$};
				\node[font=\normalsize,text=transitiontest] at (3.86,-1.80) {$\mathbf{1}$};
				\foreach \j/\b in {1/0,2/0,3/0,4/0,5/1}{
					\node[font=\normalsize] at ({4.52+.64*(\j-.5)},0) {$\b$};
				}
				\foreach \j/\b in {1/0,2/0,4/0,5/1}{
					\node[font=\normalsize] at ({4.52+.64*(\j-.5)},-1.80) {$\b$};
				}
				\node[font=\normalsize,text=transitionout] at (6.12,-1.80) {$\mathbf{1}$};
				\foreach \j/\b in {1/0,2/0,4/0,5/0,6/0}{
					\node[font=\normalsize] at ({8.06+.64*(\j-.5)},0) {$\b$};
				}
				\node[font=\normalsize,text=transitionptr] at (9.66,0) {$\mathbf{1}$};
				\foreach \j/\b in {1/0,3/0,4/0,5/0,6/0}{
					\node[font=\normalsize] at ({8.06+.64*(\j-.5)},-1.80) {$\b$};
				}
				\node[font=\normalsize,text=transitionptr] at (9.02,-1.80) {$\mathbf{1}$};
				
				\foreach \y in {0,-1.80}{
					\draw[transitionink!65,line width=.5pt]
					(-.01,\y+.31) -- (-.12,\y+.31) --
					(-.12,\y-.31) -- (-.01,\y-.31);
					\draw[transitionink!65,line width=.5pt]
					(11.91,\y+.31) -- (12.02,\y+.31) --
					(12.02,\y-.31) -- (11.91,\y-.31);
				}
				
				\draw[arrow,draw=transitionrow!70] (1.60,-.40) -- (1.60,-1.40);
				\node[anchor=west,text=transitionrow] at (1.78,-.90) {\textsf{copy}};
				
				\node[text=transitiontest] at (3.86,-.64) {$21-16\ge4$};
				\draw[arrow,draw=transitiontest!80] (3.86,-.87) -- (3.86,-1.40);
				
				\draw[arrow,draw=transitiontest!80,rounded corners=2pt]
				(4.22,-1.80) -- (4.36,-1.80) -- (4.36,-1.02) --
				(6.12,-1.02) -- (6.12,-1.40);
				\node[text=transitiontest] at (5.33,-.78) {\textsf{write 1}};
				
				\draw[arrow,draw=transitionptr!85] (9.66,-.40) -- (9.02,-1.40);
				\node[anchor=west,text=transitionptr] at (9.64,-.90) {\textsf{shift}};
			\end{tikzpicture}%
		}
		\endgroup
		\caption{One round of the recursion for $m=5$, selected row $i=2$,
			and encoding $\theta_2=21$, from $t=3$ to $t+1=4$.
			Within each region (Row, Test, Output, Pointer), coordinates are ordered by increasing $j$.
			The row bits remain unchanged. The current output bits encode
			$16$, and the pointer selects position $3$, whose weight is
			$2^{3-1}=4$. Since $21-16\ge4$, the new test bit is $1$,
			so the third output bit is set to $1$, while all other output
			bits remain unchanged. Finally, the pointer shifts from
			position $3$ to position $2$, selecting the next digit to test.}
		\label{fig:recursion-transition}
	\end{figure}
	
	\Needspace{5\baselineskip}
	\begin{lemma}\label{lem:digits}
		For every labeling $\lambda\in\{0,1\}^{m\times m}$ with encoding
		$\theta$, we have $f^t_{\theta,i,\out,j}=\lambda_{i,j}$ for all
		$i,j\in[m]$ and $t\ge m+1$. Moreover, for every labeling
		$\lambda' \in \{0,1\}^{m \times m}$ with encoding $\theta'$, we have
		$f^t_{\theta,i}=f^t_{\theta',i}$ for all $i\in[m]$ and $t\in\{0,1\}$.
	\end{lemma}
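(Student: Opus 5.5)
The plan is to prove both parts by tracking the pointer explicitly and then running an induction on $t$ for the output bits. First, from \eqref{eq:init}, \eqref{eq:ptr} and \eqref{eq:last}, the pointer is a single $1$ that shifts down by one position per round. Concretely, for $0\le t\le m$ we have $f^t_{\theta,i,\ptr,j}=\ind{j=m+1-t}$ for $j\in[m+1]$, and for $t\ge m+1$ all pointer bits are $0$. The row bits never change by \eqref{eq:row}, so $\sum_j\theta_j f^t_{\theta,i,\row,j}=\theta_i$ for all $t$. Write $p_t=m+1-t$ for the pointer position at time $t$.

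\emph{Second statement and base case.} At $t=0$ the block is given by \eqref{eq:init} and does not involve $\theta$. For $t=1$, the row bits and pointer bits follow from the above. In \eqref{eq:test} all $f^0_{\theta,i,\out,j}$ vanish, and so do all $f^0_{\theta,i,\ptr,j}$ with $j\le m$. Hence the test reads $\theta_i\ge0$, which always holds, so $f^1_{\theta,i,\test}=1$ for every labeling. In \eqref{eq:out} each output bit has $2\cdot0+0+1<2$, so all output bits at time $1$ are $0$. Thus $f^1_{\theta,i}$ is also independent of $\theta$, which gives the second claim.

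\emph{Induction.} The invariant I would prove is that for $1\le t\le m+1$,
\[
f^t_{\theta,i,\out,j}=\lambda_{i,j}\ind{j>p_t}\qquad(j\in[m]).
\]
Equivalently, the output encodes $\sum_{j>p_t}2^{j-1}\lambda_{i,j}$. The base case $t=1$, where $p_1=m$, was just shown.

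For the step $t\to t+1$ with $1\le t\le m$, the pointer contributes $2^{p_t-1}$ to \eqref{eq:test}. The test therefore asks whether
\[
\theta_i-\sum_{j>p_t}2^{j-1}\lambda_{i,j}=\sum_{j\le p_t}2^{j-1}\lambda_{i,j}\ge 2^{p_t-1}.
\]
Since $\sum_{j<p_t}2^{j-1}\lambda_{i,j}\le 2^{p_t-1}-1$, this holds iff $\lambda_{i,p_t}=1$, so $f^{t+1}_{\theta,i,\test}=\lambda_{i,p_t}$. Now apply \eqref{eq:out}. For $j\ne p_t$ the pointer bit is $0$, so the threshold $2$ is reached iff the old output bit is $1$, and the bit is unchanged. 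For $j=p_t$ the old output bit is $0$ by the invariant, so the new bit equals the test bit $\lambda_{i,p_t}$. This gives the invariant at time $t+1$, with $p_{t+1}=p_t-1$.

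At $t=m+1$ we have $p_t=0$, so all outputs equal $\lambda_{i,j}$. For $t\ge m+1$ all pointer bits are zero, and by \eqref{eq:out} each output bit is copied regardless of the test. Hence the outputs remain $\lambda_{i,j}$ forever, proving the first claim.

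The main obstacle is getting the index bookkeeping right, namely the off-by-one between the pointer position at time $t$ and the digit decided at time $t+1$. The real content is the single binary-comparison inequality above, and I would state it as the core of the inductive step.
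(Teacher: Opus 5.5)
Your proof is correct and follows essentially the same route as the paper. You use the explicit pointer formula, get $\theta$-independence of $f^0,f^1$ from the always-true test $\theta_i\ge0$, show by induction that each round's test returns the pointed-to digit via the binary comparison, and note that the outputs freeze once the pointer vanishes. The only difference is that you index the induction forward in $t$ with $p_t=m+1-t$, whereas the paper runs a descending induction on the digit index $k$ with $t=m-k+2$; these are the same invariant under $k=p_{t-1}$.
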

	\begin{proof}
		Fix $i\in[m]$. We prove the two assertions separately starting from the second. 
		By \eqref{eq:init} and \eqref{eq:row},
		$f^t_{\theta,i,\row,j}=\ind{j=i}$ for all $t\ge0$ and $j\in[m]$.
		Equations~\eqref{eq:test}--\eqref{eq:last} give
		\begin{equation}\label{eq:first-state}
			\begin{aligned}
				f^1_{\theta,i,\test}&=\ind{\theta_i\ge0}=1,\\
				f^1_{\theta,i,\out,j}&=0 &&(j\in[m]),\\
				f^1_{\theta,i,\ptr,j}&=\ind{j=m} &&(j\in[m+1]).
			\end{aligned}
		\end{equation}
		Thus, all coordinates of $f^0_{\theta,i}$ and $f^1_{\theta,i}$
		are determined by $i$ alone, independently of $\lambda$.
		
		We now prove the first claim. 
		First, \eqref{eq:init} and \eqref{eq:ptr}--\eqref{eq:last}
		give, by induction on $t$,
		\begin{equation}\label{eq:pointer}
			f^t_{\theta,i,\ptr,j}=\ind{t=m+1-j}
			\qquad(j\in[m+1],\ t\ge0).
		\end{equation}
		Indeed, the formula holds at $t=0$; for $j\in[m]$,
		$f^{t+1}_{\theta,i,\ptr,j}
		=f^t_{\theta,i,\ptr,j+1}=\ind{t=m-j}$,
		while $f^t_{\theta,i,\ptr,m+1} = 0$ for $t\ge1$.
		We prove by descending induction on $k=m,\ldots,1$ that,
		for $t=m-k+2$ it holds that 
		\begin{equation}\label{eq:induct}
			f^{t-1}_{\theta,i,\out,j}=
			\begin{cases}
				\lambda_{i,j},&j>k,\\
				0,&j\le k,
			\end{cases}
		\end{equation}
	and that round $t$ satisfies
	$f^t_{\theta,i,\out,k}=\lambda_{i,k}$ and
	$f^t_{\theta,i,\out,j}=f^{t-1}_{\theta,i,\out,j}$
	for every $j\ne k$.
		
		\emph{Base case ($k=m$).}
		Here $t=2$, and \eqref{eq:first-state} gives
		$f^1_{\theta,i,\out,j}=0$ for every $j\in[m]$,
		as required by \eqref{eq:induct}.
		
		\emph{Induction step.}
		Assume \eqref{eq:induct} for $k$. By \eqref{eq:pointer},
		the only nonzero pointer bit at time $t-1$ is at position $k$.
		Hence,
		\[
		\sum_{j=1}^m2^{j-1}
		\bigl(f^{t-1}_{\theta,i,\out,j}
		+f^{t-1}_{\theta,i,\ptr,j}\bigr)
		=2^{k-1}+\sum_{j=k+1}^m2^{j-1}\lambda_{i,j}.
		\]
		Substituting into \eqref{eq:test} gives
		\[
		f^t_{\theta,i,\test}
		=\mathbf{1}\left\{2^{k-1}(\lambda_{i,k}-1)
			+\sum_{j=1}^{k-1}2^{j-1}\lambda_{i,j}\ge0\right\}
		=\lambda_{i,k},
		\]
		where the last equality follows since
		$\sum_{j=1}^{k-1}2^{j-1}\lambda_{i,j}<2^{k-1}$.
		Therefore, by \eqref{eq:out},
		\[
		f^t_{\theta,i,\out,k}
		=\ind{1+f^t_{\theta,i,\test}\ge2}
		=\lambda_{i,k}.
		\]
		For every $j\ne k$, \eqref{eq:pointer} gives
		$f^{t-1}_{\theta,i,\ptr,j}=0$, so
		\[
		f^t_{\theta,i,\out,j}
		=\ind{2f^{t-1}_{\theta,i,\out,j}
			+f^t_{\theta,i,\test}\ge2}
		=f^{t-1}_{\theta,i,\out,j}.
		\]
		For $k>1$, this establishes \eqref{eq:induct} for
		$k-1$, completing the induction.
		At time $m+1$, all output bits equal their labels.
		By \eqref{eq:pointer},
		$f^t_{\theta,i,\ptr,j}=\ind{t=m+1-j}=0$
		for all $j\in[m+1]$ and $t\ge m+1$.
		Thus, \eqref{eq:out} gives
		$f^{t+1}_{\theta,i,\out,j}=f^t_{\theta,i,\out,j}$
		for all $j\in[m]$ and $t\ge m+1$.
		Hence, $f^t_{\theta,i,\out,j}=\lambda_{i,j}$
		for every $t\ge m+1$, proving the lemma.
	\end{proof}

	\section{Implementing the Recursion with One LTF}\label{sec:single}
	
	The recursive formula described in the previous section updates several bits per round, whereas
	\eqref{eq:model} produces one bit per step. To reduce the recursive formula to one LTF, in this section we encode each vector of bits
	$f^t_{\theta,i}$ as a {\em block} of $p$ bits, where
	\begin{equation}\label{eq:params}
		p=378(m+2).
	\end{equation}
	We choose the coefficients of a single LTF so that the current state, encoded in a block $E^t_{\theta,i}$ (defined below), produces $E^{t+1}_{\theta,i}$ as its next block. 
	We now define the blocks. For the entire section, fix $\lambda \in \{0,1\}^{m \times m}$ and its encoding $\theta$. 
	Number the positions of each block by $0,\ldots,p-1$, and define positions within the block by
	\begin{equation}\label{eq:positions}
		\begin{aligned}
			&\pi(\test)=p/6+3,\\
			&\pi(\row,j)=p/6+63j+6 \qquad(j\in[m]),\\
			&\pi(\out,j)=p/6+63j+15 \qquad(j\in[m]),\\
			&\pi(\ptr,j)=p/6+63j+45+6(j\bmod2) \qquad(j\in[m+1]).
		\end{aligned}
	\end{equation}
	These positions are distinct and lie in $(p/6,p/3)$; the largest
	is at most $p/3-12$. Each row bit will be stored once at \(\pi(\row,j)\), while each test, output, and pointer bit will be stored twice, at \(\pi(b)\) and \(\pi(b)+1\), as specified in \(\eqref{eq:encode}\).
	The bit at position $0$ is fixed to $1$.
	Now, formally, define the block $E^t_{\theta,i}\in\{0,1\}^p$ using the above positions by
	\begin{figure}[h]
		\centering
		\begingroup
		\definecolor{blockrow}{HTML}{287A70}
		\definecolor{blocktest}{HTML}{A66A15}
		\definecolor{blockout}{HTML}{356BA3}
		\definecolor{blockptr}{HTML}{795A92}
		\definecolor{blockink}{HTML}{333840}
		\begin{tikzpicture}[
			x=.97cm,y=1cm,font=\small,text=blockink,
			bit/.style={line width=.45pt},
			digit/.style={font=\scriptsize,inner sep=0pt},
			flow/.style={-{Stealth[length=1.8mm,width=1.3mm]},
				line width=.7pt}
			]
			\foreach \y in {0,-1.65}{
				\filldraw[fill=black!1,draw=black!18,line width=.45pt]
				(0,\y-.31) rectangle (13.2,\y+.31);
				\fill[blockink] (.14,\y-.19) rectangle (.37,\y+.19);
				\node[text=white,font=\scriptsize,inner sep=0pt]
				at (.255,\y) {$1$};
				\foreach \x in {1.0,12.3}
				\node[text=black!35,inner sep=0pt] at (\x,\y) {$\cdots$};
				\foreach \x/\c/\n in {
					1.65/blocktest/2,
					2.60/blockrow/1,2.98/blockout/2,3.60/blockptr/2,
					4.25/blockrow/1,4.63/blockout/2,5.25/blockptr/2,
					5.90/blockrow/1,6.28/blockout/2,6.90/blockptr/2,
					7.55/blockrow/1,7.93/blockout/2,8.55/blockptr/2,
					9.20/blockrow/1,9.58/blockout/2,10.20/blockptr/2,
					11.03/blockptr/2}{
					\foreach \k in {1,...,\n}{
						\filldraw[bit,draw=\c!60,fill=\c!4]
						({\x+.23*(\k-1)},\y-.19) rectangle ++(.23,.38);
						\node[digit,text=\c!70!black]
						at ({\x+.23*(\k-.5)},\y) {$0$};
					}
				}
				\filldraw[bit,draw=blockrow,fill=blockrow!90]
				(4.25,\y-.19) rectangle ++(.23,.38);
				\node[digit,text=white] at (4.365,\y) {$1$};
				\foreach \x in {9.58,9.81}{
					\filldraw[bit,draw=blockout,fill=blockout!90]
					(\x,\y-.19) rectangle ++(.23,.38);
					\node[digit,text=white] at (\x+.115,\y) {$1$};
				}
			}
			\foreach \x/\y in {6.90/0,7.13/0,5.25/-1.65,5.48/-1.65}{
				\filldraw[bit,draw=blockptr,fill=blockptr!90]
				(\x,\y-.19) rectangle ++(.23,.38);
				\node[digit,text=white] at (\x+.115,\y) {$1$};
			}
			\foreach \x/\c in {
				1.65/blocktest,1.88/blocktest,6.28/blockout,6.51/blockout}{
				\filldraw[bit,draw=\c,fill=\c!90]
				(\x,-1.84) rectangle ++(.23,.38);
				\node[digit,text=white] at (\x+.115,-1.65) {$1$};
			}
			
			\node[anchor=east,inner sep=0pt] at (-.26,0) {$E^3$};
			\node[anchor=east,inner sep=0pt] at (-.26,-1.65) {$E^4$};
			\node[text=blocktest,inner sep=0pt] at (1.88,-.76) {$21-16\ge4$};
			\draw[flow,draw=blocktest!85] (1.88,-.97) -- (1.88,-1.23);
			\draw[flow,draw=blockink!65] (6.55,-.46) -- (6.55,-1.19)
			node[midway,right=6pt,font=\small\sffamily] {one LTF};
			
			\draw[flow,draw=blocktest!85,rounded corners=3pt]
			(1.88,-1.94) -- (1.88,-2.24) -- (6.51,-2.24) -- (6.51,-1.94);
			\draw[flow,draw=blockink!45] (12.0,-2.14) -- (13.2,-2.14);
			
			\foreach \x/\c/\name in {
				3.2/blockrow/row,5.2/blocktest/test,
				7.2/blockout/output,9.4/blockptr/pointer}{
				\fill[\c!90] (\x,-2.84) rectangle ++(.16,.16);
				\node[anchor=west,font=\footnotesize\sffamily,inner sep=0pt]
				at (\x+.27,-2.76) {\name};
			}
		\end{tikzpicture}
		\endgroup
		\caption{Transition from $E^3$ to $E^4$ for $m=5$, $i=2$,
			and $\theta_2=21$.
			The black $1$ is the first bit of each block, fixed to $1$
			by the definition of the encoding in \eqref{eq:encode}.
			The same LTF generates all bits of $E^4$ from left to right.
			The row bits remain unchanged.
			The output bits initially represent $16$, and the pointer selects
			$j=3$, with weight $2^{3-1} = 4$.
			Since $21-16\ge4$, the new test bit is $1$.
			As the lower arrow indicates, this new test bit sets the third
			output bit to $1$, so the output bits now represent $16+4 = 20$.
			The pointer changes to $j=2$.
			Test, output, and pointer bits are each stored twice in adjacent
			positions. Unmarked positions are zero; spacing is schematic.}
		\label{fig:block-transition}
	\end{figure}
	
	\begin{subequations}\label{eq:encode}
		\begin{align}
			&E^t_{\theta,i,0}=1,\notag\\
			&E^t_{\theta,i,\pi(\row,j)}=f^t_{\theta,i,\row,j} \qquad(j\in[m]),\label{eq:enc-row}\\
			&E^t_{\theta,i,\pi(\test)}=E^t_{\theta,i,\pi(\test)+1}=f^t_{\theta,i,\test},\notag\\
			&E^t_{\theta,i,\pi(\out,j)}=E^t_{\theta,i,\pi(\out,j)+1}=f^t_{\theta,i,\out,j} \qquad(j\in[m]),\notag\\
			&E^t_{\theta,i,\pi(\ptr,j)}=E^t_{\theta,i,\pi(\ptr,j)+1}=f^t_{\theta,i,\ptr,j} \qquad(j\in[m+1]),\notag
		\end{align}
	\end{subequations}
	and set all other positions to zero. In particular, every encoded
	block ends in zero. We will use the single recurrence
	\begin{equation}\label{eq:scaled}
		z_n=\ind{-B+\sum_{\ell=1}^{p+1}a_\ell z_{n-\ell}\ge0}.
	\end{equation}
	For $d\ge p+1$, this is \eqref{eq:model} with
	$w_\ell=a_\ell/B$ for $\ell\le p+1$ and $w_\ell=0$ otherwise. The next lemma shows that there are coefficients $B, (a_{\ell})_{\ell \in [p+1]}$ that can faithfully update the blocks as defined above. 
	For $i\in[m]$ and $t\ge0$, the \emph{history}
	$(0,E^t_{\theta,i})$ denotes the initial values
	$z_{-p-1}=0$ and $z_{-p+r}=E^t_{\theta,i,r}$
	for $0\le r<p$.

	\Needspace{14\baselineskip}
	\begin{lemma}\label{lem:simulation}
		For every $\theta\in\{0,\ldots,2^m-1\}^m$, there are integers
		$B>0$ and $a_1,\ldots,a_{p+1}$ such that, for every $i\in[m]$
		and $t\ge0$, the history $(0,E^t_{\theta,i})$ generates
		$E^{t+1}_{\theta,i}$ as its next $p$ bits under \eqref{eq:scaled}.
	\end{lemma}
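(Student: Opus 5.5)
The plan is to build the coefficients as a sum of a few \emph{families} of weights, one family per type of update in \eqref{eq:row}--\eqref{eq:last}. The whole proof is then a check that each family fires only at the position it is meant for.

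\textbf{Reading the window.} Place the history so that the old block $E^t_{\theta,i}$ occupies $z_{-p},\ldots,z_{-1}$ and the new block is generated at $z_0,\ldots,z_{p-1}$. When $z_r$ is generated, lag $\ell\le r$ reads position $r-\ell$ of the new block. Lag $\ell\in(r,p+1]$ reads position $p+r-\ell$ of the old block, except that lag $p+1$ at $r=0$ reads the padding $0$. Hence at step $r$ only old-block positions $\ge r-1$ are visible. Two consequences follow:
\begin{itemize}
\item Since all information positions lie in $(p/6,p/3)$, for $r\ge p/3$ no old information bit is visible. The only possibly active inputs are then the new marker at lag $r$ and new information bits at lags in $(r-p/3,\,r-p/6)$.
\item At the informative steps $r\in(p/6,p/3)$, every needed old bit sits at a position larger than the one being written, so it is still visible. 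This holds for the row bits (copied at lag exactly $p$), for the old output and pointer bits $j$ (feeding $\pi(\out,j)$), and for pointer $j+1$ (feeding $\pi(\ptr,j)$).
\end{itemize}

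\textbf{Structure of the coefficients.} I would take $B=H$ for a large $H$ and put a selector term $H\cdot\mathbf 1\{\ell\in S\}$ into $a_\ell$, where $S$ is the set of offsets $\pi(\cdot)$ and $\pi(\cdot)+1$ of all computed positions. The new marker, always $1$ at lag $r$, then cancels the bias exactly when $r\in S$. For $r=0$ I would instead rely on the old marker at lag $p$ together with the padding zero at lag $p+1$, and add $H$ to $a_p$; this also serves the row copies at lag $p$. I would keep track of the fact that this contribution is harmless at $r\ne0$, because the old marker is then at lag $p+r$. On top of the selector, add a local weight family for each update type, with magnitudes well below $H$:
\begin{itemize}
\item \emph{Test.} Use weights $\theta_j$ at the lags from $\pi(\test)$ to old $\pi(\row,j)$, and $-2^{j-1}$ at the lags to old $\pi(\out,j)$ and $\pi(\ptr,j)$, with the threshold encoded in the bias.
\item \emph{Output $j$.} Use weights $2,1,1$ at the lags to old $\pi(\out,j)$, old $\pi(\ptr,j)$ and new $\pi(\test)$, with a shifted threshold.
\item \emph{Pointer $j$.} Use a copy at lag $p-(\pi(\ptr,j+1)-\pi(\ptr,j))=p-63\pm6$. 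The parity term $6(j\bmod 2)$ makes this lag depend on the parity of $j$, which lets the two cases be separated.
\item \emph{Duplicate positions $\pi(b)+1$.} Use a weight on lag $1$ that copies the bit just written. For each input that contributed to $z_{\pi(b)}$, add an opposite weight at the lag shifted by one, so the recomputed score at $\pi(b)+1$ cancels and only the copy term survives.
\end{itemize}
The scale $H$ is chosen larger than the total absolute value of all local weights. A position with $r\notin S$ then has score at most $-H+(\text{small})<0$ and outputs $0$, as required.

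\textbf{Main obstacle: non-interference.} A lag $\ell$ that carries a weight intended for one pair of positions is also applied at every other step $r$, to whatever bit sits at offset $\ell$. I would show that every such spurious pairing hits a position that is forced to $0$ in every encoded block. The tool is the rigid arithmetic of \eqref{eq:positions}: all relevant offsets are $p/6$ plus an integer of the form $63j+c$ with $c\in\{3,6,15,45,51\}$ (plus possibly $1$). Differences between two such positions therefore have a residue mod $63$ that identifies the pair of types, and the $p/6$ margins prevent wrap-around between the old and new blocks. The ordering (test before outputs) ensures that whenever the selector fires, the intended inputs are already written.

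Concretely, I would tabulate the finitely many lag classes (copy at lag $p$, lags $p-63j'+\delta$, short lags $63j'+\delta$, and lag $1$) with their residues. For each I would verify that, for any $r\in S$ of a different type, the bit at that lag lies at a residue that is not an information position, or outside the visible window. I expect the careful case check, especially the cancellation weights for the duplicated bits and the $r=0,1$ boundary steps where the old marker is visible, to be the bulk of the work.
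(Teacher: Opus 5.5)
Your architecture matches the paper's: the new marker acts as a selector at lags $\pi(b)$ and $\pi(b)+1$, program weights sit at lag differences, duplicates use a lag-$1$ copy with negated companions, and residues mod $63$ give non-interference. But two of the large control weights are missing, and the construction fails without them.

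\textbf{Lag $p$ at the duplicated positions.} The weight $H$ you put at lag $p$ is not harmless for $r\ne0$; you only tracked the old marker. At step $r$ this weight reads position $r$ of the old block. So at $r=\pi(b)$ and $r=\pi(b)+1$, for every test, output and pointer coordinate $b$, it adds $Hf^t_b$ on top of the selector, and every such bit that was $1$ is forced to $1$ again. For example, $f^1_{\test}=1$ forces $f^2_{\test}=1$ even when $\lambda_{i,m}=0$, and $f^1_{\ptr,m}=1$ forces $f^2_{\ptr,m}=1$. Your principle that every spurious pairing hits an always-zero position cannot cover lag $p$, because lag $p$ pairs each position with its own old copy. (Your output family's weight on old $\pi(\out,j)$ also sits at lag $p$.) The paper fixes this with $a_{p-1}=1-B$ and $a_{p+1}=-(B+1)$. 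Each computed bit occupies two adjacent positions, whereas the marker and row bits have zero neighbours. Hence the lag-$p$ term shrinks to $2f^t_b$ at $\pi(b)$ and to $0$ at $\pi(b)+1$, while the marker and row copies survive; this is what the duplicated storage makes possible. The residual $2f^t_b$ is then absorbed by the thresholds:
\begin{itemize}
\item for outputs it is exactly the self-weight $2$ in \eqref{eq:out};
\item for pointers it is harmless, since $\ind{2u+4v-3\ge0}=v$;
\item for the test, the program is scaled by $4$, so the offset $1+2f^t_{\test}\in\{1,3\}$ cannot change whether a multiple of $4$ is nonnegative.
\end{itemize}

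\textbf{Late positions.} Your bound $-H+(\text{small})$ for $r\notin S$ fails when $p/3<r<2p/3$. There the selector weights at lags $\ell\in S\subset(p/6,p/3)$ read positions $r-\ell$ of the new block, which can be freshly written information bits. This is exactly the overlap with the lags $(r-p/3,r-p/6)$ from your first bullet. For instance, at $r=2\pi(\test)+1=p/3+7$, the lags $\pi(\test)$ and $\pi(\test)+1$ read the two new copies of the test bit. The score is then about $-H+2H>0$ whenever $f^{t+1}_{\test}=1$ (already at $t=0$), writing a $1$ where a $0$ is required. The paper adds the band $a_\ell=-B^2$ for $p/3<\ell\le2p/3$:
\begin{itemize}
\item at these steps the new marker sits at lag $r$, and $-B^2$ exceeds the sum of all positive weights;
\item at $r=0$ and at the informative steps, these lags only read old positions $\ge p/3$, which are zero;
\item for $r>2p/3$, the selectors only read new positions in $(p/3,5p/6)$, which are already zero.
\end{itemize}

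\textbf{A smaller slip.} Companions must be attached only to duplicated inputs. A companion for the test's row weights would read the old row bit at step $\pi(\test)+1$, while the original weight reads its zero neighbour. This leaves an uncancelled multiple of $\theta_i$.
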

	
	\begin{proof}
		Set $B=2^{m+5}(m+2)$. We first define an auxiliary weight vector $a^{(0)}$ to be used for defining the final weight vector $a$.  Define integer coefficients
		$a^{(0)}_1,\ldots,a^{(0)}_{p+1}$ as follows, for every $j\in[m]$:
		\begin{subequations}\label{eq:coefficients}
			\begin{align}
				&a^{(0)}_{p+\pi(\test)-\pi(\row,j)}=4\theta_j,\label{eq:a0-test-row}\\
				&a^{(0)}_{p+\pi(\test)-\pi(\out,j)}=-2^{j+1},\label{eq:a0-test-out}\\
				&a^{(0)}_{p+\pi(\test)-\pi(\ptr,j)}=-2^{j+1},\label{eq:a0-test-ptr}\\
				&a^{(0)}_{\pi(\out,j)-\pi(\test)}=1,\label{eq:a0-out-test}\\
				&a^{(0)}_{p+\pi(\out,j)-\pi(\ptr,j)}=1,\label{eq:a0-out-ptr}\\
				&a^{(0)}_{p+\pi(\ptr,j)-\pi(\ptr,j+1)}=4 \qquad(j\in[m]).\label{eq:a0-ptr}
			\end{align}
		\end{subequations}
		For every $j\in[m]$, also set
		
		\begin{equation}\label{eq:coefficients2}
			\begin{aligned}
				&a^{(0)}_{p+\pi(\test)-\pi(\out,j)+1}=2^{j+1},\\
				&a^{(0)}_{p+\pi(\test)-\pi(\ptr,j)+1}=2^{j+1},\\
				&a^{(0)}_{\pi(\out,j)-\pi(\test)+1}=-1,\\
				&a^{(0)}_{p+\pi(\out,j)-\pi(\ptr,j)+1}=-1,\\
				&a^{(0)}_{p+\pi(\ptr,j)-\pi(\ptr,j+1)+1}=-4.
			\end{aligned}
		\end{equation}
		
		Set all other coefficients $a^{(0)}_\ell$ for $\ell \in [p+1]$ not defined in \eqref{eq:coefficients} and \eqref{eq:coefficients2} to zero.
		We show that the above assignment is consistent. 
		
		\medskip\noindent
		\textbf{Claim 1.} For any $\ell\in[p+1]$, if \eqref{eq:coefficients}--\eqref{eq:coefficients2}
		assign both $a^{(0)}_\ell=q$ and $a^{(0)}_\ell=q'$, then $q=q'$.
		
		\noindent\emph{Proof.}
		By \eqref{eq:positions}, all defined indices lie in
		$(0,p/6)\cup(5p/6,p)$, and
		\begin{equation}\label{eq:position-residues}
			(\pi(\test),\pi(\row,j),\pi(\out,j),\pi(\ptr,j))
			\equiv(3,6,15,45+6(j\bmod2))\pmod{63}.
		\end{equation}
		For $R=\{3,6,15,45,51\}$, the $20$ differences $u-v$ with $u,v\in R$ and $u\ne v$ modulo $63$ are precisely
		$\{3,6,\ldots,60\}$. Thus, for $u,v,u',v'\in R$
		with $u\ne v$ and $u'\ne v'$ it holds that 
		\begin{equation}\label{eq:residue-unique}
			u-v\equiv u'-v'\pmod{63}
			\implies (u,v)=(u',v').
		\end{equation}
		Since $63\mid p$, the indices in the six lines of
\eqref{eq:coefficients} have residues modulo $63$ belonging,
respectively, to
$\{60\}$, $\{51\}$, $\{15,21\}$, $\{12\}$,
$\{27,33\}$, and $\{6,57\}$.
These sets are pairwise disjoint, so indices from different
lines cannot coincide.
		Within \eqref{eq:a0-test-row}--\eqref{eq:a0-out-test}, equality
		of indices implies $j=j'$ by \eqref{eq:positions}.
		Within \eqref{eq:a0-out-ptr} and \eqref{eq:a0-ptr}, the assigned
		values are $1$ and $4$, respectively, even when indices repeat:
		\begin{subequations}
			\begin{align}
				&\pi(\out,j)-\pi(\ptr,j)=-30-6(j\bmod2),
				\label{eq:out-ptr-difference}\\
				&\pi(\ptr,j)-\pi(\ptr,j+1)=-69+12(j\bmod2).
				\label{eq:ptr-ptr-difference}
			\end{align}
		\end{subequations}
		Finally, all indices in \eqref{eq:coefficients} are $0$ modulo $3$.
		Each equality in \eqref{eq:coefficients2} is obtained from
		\eqref{eq:a0-test-out}--\eqref{eq:a0-ptr} by adding $1$ to the
		index and negating the value. These indices are $1$ modulo $3$,
		so they do not coincide with those in \eqref{eq:coefficients}
		and cannot assign conflicting values either.
		\hfill$\blacksquare$
		
		\medskip
		
		We now define the vector of coefficients $a$ based on $a^{(0)}$.
		Throughout the remainder of this proof, $b$ ranges over $\test$,
		$(\out,j)$ for $j\in[m]$, and $(\ptr,j)$ for $j\in[m+1]$. 
		Start with $a_\ell=a^{(0)}_\ell$, and for the following coordinates update 
		\begin{equation}\label{eq:remaining}
			\begin{aligned}
				&a_1=2,\\
				&a_{p-1}=1-B,\\
				&a_p=B+1,\\
				&a_{p+1}=-B-1,\\
				&a_{\pi(\test)}=B+1,\\
				&a_{\pi(\out,j)}=B-2 \qquad(j\in[m]),\\
				&a_{\pi(\ptr,j)}=B-3 \qquad(j\in[m+1]),\\
				&a_{\pi(b)+1}=B-1.
			\end{aligned}
		\end{equation}
		and
		\begin{equation}\label{eq:copies-suppression}
			a_\ell=-B^2\qquad(p/3<\ell\le2p/3).
		\end{equation}
		These assignments are disjoint from each other and from
		\eqref{eq:coefficients}--\eqref{eq:coefficients2}, by their index ranges and residues modulo $3$.
		Fix $i\in[m]$ and $t\ge0$. Let
		\begin{equation*}
			\widehat E=(0,E^t_{\theta,i},E^{t+1}_{\theta,i}),
		\end{equation*}
		indexed by $-p-1,\ldots,p-1$, and define the set of
		positions specified in \eqref{eq:encode} by
		\begin{equation*}
			\Pi=\{0\}\cup\{\pi(\row,j):j\in[m]\}
			\cup\bigcup_b\{\pi(b),\pi(b)+1\},
		\end{equation*}
		where $b$ ranges over $\test$, $(\out,j)$ for $j\in[m]$,
		and $(\ptr,j)$ for $j\in[m+1]$. Recall that all other positions are zero.
		For $0\le r<p$, define
		\begin{equation}\label{eq:score-definition}
			\sigma_r=-B+\sum_{\ell=1}^{p+1}a_\ell\widehat E_{r-\ell}.
		\end{equation}
		
		\medskip\noindent
		\textbf{Claim 2.} $\widehat E_r=\ind{\sigma_r\ge0}$ for every $r\in\Pi$.
		
		\noindent\emph{Proof.}
		We first compute the contribution of the coefficients
		$a^{(0)}_\ell$ to $\sigma_r$ in \eqref{eq:score-definition}. For $0\le r<p$, set
		\begin{equation}\label{eq:aux-score}
			\sigma^{(0)}_r=\sum_{\ell=1}^{p+1}a^{(0)}_\ell\widehat E_{r-\ell}.
		\end{equation}
		For the index calculations below, pad the vector $a^{(0)}$ with additional zero entries by setting
		$a^{(0)}_\ell=0$ for $\ell\in\mathbb Z\setminus[p+1]$.
		Equations~\eqref{eq:out-ptr-difference}--\eqref{eq:ptr-ptr-difference},
		\eqref{eq:residue-unique}, and
		\eqref{eq:a0-out-ptr}--\eqref{eq:a0-ptr} give
		\begin{subequations}
			\begin{align}
				&a^{(0)}_{p+\pi(\out,j)-\pi(\ptr,k)}=\ind{k=j}
				\qquad(j\in[m],\ k\in[m+1]),\label{eq:out-input-coefficient}\\
				&a^{(0)}_{p+\pi(\ptr,j)-\pi(\ptr,k)}=4\ind{k=j+1}
				\qquad(j,k\in[m+1]).\label{eq:ptr-input-coefficient}
			\end{align}
		\end{subequations}
		Also, \eqref{eq:encode} gives
		\begin{equation}\label{eq:secondary-zero}
			\widehat E_{r-\ell}=0
			\qquad(r\in\Pi,\ r\equiv0\pmod3,\ \ell\in[p+1],\ \ell\equiv1\pmod3).
		\end{equation}
		Thus, \eqref{eq:coefficients2} contributes zero at all positions
		$\pi(\test)$, $\pi(\row,j)$, $\pi(\out,j)$, and $\pi(\ptr,j)$.
		Substituting \eqref{eq:encode} into \eqref{eq:aux-score}, using
		\eqref{eq:residue-unique}, gives the following identities:
		\eqref{eq:a0-test-row}--\eqref{eq:a0-test-ptr} give
		\begin{equation}\label{eq:aux-test}
			\sigma^{(0)}_{\pi(\test)}
			=4\sum_{j=1}^m\theta_j f^t_{\theta,i,\row,j}
			-4\sum_{j=1}^m2^{j-1}
			(f^t_{\theta,i,\out,j}+f^t_{\theta,i,\ptr,j});
		\end{equation}
		\eqref{eq:a0-out-test} and \eqref{eq:out-input-coefficient} give
		\begin{equation*}
			\sigma^{(0)}_{\pi(\out,j)}
			=f^{t+1}_{\theta,i,\test}+f^t_{\theta,i,\ptr,j}
			\qquad(j\in[m]);
		\end{equation*}
		and \eqref{eq:ptr-input-coefficient} gives
		\begin{equation}\label{eq:aux-ptr}
			\sigma^{(0)}_{\pi(\ptr,j)}
			=4\sum_{k=1}^{m+1}\ind{k=j+1}f^t_{\theta,i,\ptr,k}
			\qquad(j\in[m+1]).
		\end{equation}
		In particular, $\sigma^{(0)}_{\pi(\ptr,m+1)}=0$.
		
		We now show that the contribution of $a^{(0)}$ to
		$\sigma_{\pi(b)+1}$ is zero for every coordinate $b$
		stored at both $\pi(b)$ and $\pi(b)+1$ in \eqref{eq:encode}.
		For every equality $a^{(0)}_\ell=q$ in
		\eqref{eq:a0-test-out}--\eqref{eq:a0-ptr},
		\eqref{eq:encode} and \eqref{eq:residue-unique} give
		\begin{equation*}
			\widehat E_{\pi(b)+1-\ell}=\widehat E_{\pi(b)-\ell}.
		\end{equation*}
		Since $a^{(0)}_{\ell+1}=-a^{(0)}_\ell$ by \eqref{eq:coefficients2} it holds that
		\begin{equation*}
			a^{(0)}_\ell\widehat E_{\pi(b)+1-\ell}
			+a^{(0)}_{\ell+1}\widehat E_{\pi(b)-\ell}=0.
		\end{equation*}
		Equation~\eqref{eq:a0-test-row} contributes zero at $\pi(b)+1$
		by \eqref{eq:enc-row} and \eqref{eq:residue-unique}.
		At $r=0$, each index $\ell$ in
		\eqref{eq:coefficients}--\eqref{eq:coefficients2} satisfies
		$(-\ell)\bmod p\in(0,p/6)\cup(5p/6,p)$, so
		$\widehat E_{-\ell}=0$ by \eqref{eq:encode}.
		At $r=\pi(\row,j)$, \eqref{eq:position-residues}--\eqref{eq:residue-unique}
		and \eqref{eq:secondary-zero} give zero for every term. Therefore,
		\begin{equation}\label{eq:aux-zero}
			\sigma^{(0)}_0=\sigma^{(0)}_{\pi(\row,j)}
			=\sigma^{(0)}_{\pi(b)+1}=0 \qquad(j\in[m]).
		\end{equation}
		
		We next consider \eqref{eq:remaining}--\eqref{eq:copies-suppression}.
		By \eqref{eq:positions} and \eqref{eq:encode}, for $r\in\Pi$ it holds that 
		\begin{equation}\label{eq:control-inputs}
			\begin{aligned}
				&\widehat E_{r-\pi(b)-e}=\ind{r=\pi(b)+e}
				\qquad(e\in\{0,1\}),\\
				&\widehat E_{r-\ell}=0
				\qquad(p/3<\ell\le2p/3).
			\end{aligned}
		\end{equation}
		Substituting \eqref{eq:aux-test}--\eqref{eq:aux-ptr},
		\eqref{eq:aux-zero}, and \eqref{eq:control-inputs} into
		\eqref{eq:score-definition} gives
		\begin{subequations}
			\begin{align}
				&\sigma_0=1,\label{eq:score-marker}\\
				&\sigma_{\pi(\row,j)}=(B+1)f^t_{\theta,i,\row,j}-B
				\qquad&&(j\in[m]),\notag\\
				&\sigma_{\pi(\test)}=1+2f^t_{\theta,i,\test}
				+\sigma^{(0)}_{\pi(\test)},\label{eq:score-test}\\
				&\sigma_{\pi(\out,j)}=2f^t_{\theta,i,\out,j}
				+f^t_{\theta,i,\ptr,j}+f^{t+1}_{\theta,i,\test}-2
				\qquad&&(j\in[m]),\notag\\
				&\sigma_{\pi(\ptr,j)}=2f^t_{\theta,i,\ptr,j}
				+4f^t_{\theta,i,\ptr,j+1}-3
				\qquad&&(j\in[m]),\notag\\
				&\sigma_{\pi(\ptr,m+1)}=2f^t_{\theta,i,\ptr,m+1}-3<0,
				\notag\\
				&\sigma_{\pi(b)+1}=2f^{t+1}_{\theta,i,b}-1.
				\label{eq:score-copy}
			\end{align}
		\end{subequations}
		By \eqref{eq:aux-test} we have $\sigma^{(0)}_{\pi(\test)}\in4\mathbb Z$.
		Since $1+2f^t_{\theta,i,\test}\in\{1,3\}$,
		\eqref{eq:score-test} and \eqref{eq:test} imply
		\begin{equation*}
			\ind{\sigma_{\pi(\test)}\ge0}
			=\ind{\sigma^{(0)}_{\pi(\test)}\ge0}
			=f^{t+1}_{\theta,i,\test}.
		\end{equation*}
		Observe that for any bits $u,v$ it holds that  $\ind{2u+4v-3\ge0}=v$.
                Thus, \eqref{eq:score-marker}--\eqref{eq:score-copy} give
$
\ind{\sigma_0\ge0}=1$,
$\ind{\sigma_{\pi(\row,j)}\ge0}=f^{t+1}_{\theta,i,\row,j},
$
and, for every test, output, or pointer coordinate $b$ we have
\[
\ind{\sigma_{\pi(b)}\ge0}
=\ind{\sigma_{\pi(b)+1}\ge0}
=f^{t+1}_{\theta,i,b}.
\]
Hence, by \eqref{eq:encode} it holds that
$
\ind{\sigma_r\ge0}=\widehat E_r$ for every $r\in\Pi$.
		This proves the claim.\hfill$\blacksquare$

		\medskip
		
		\Needspace{6\baselineskip}
		We show that the scores at all remaining positions are negative. 
		
		\medskip\noindent
		\textbf{Claim 3.} $\sigma_r<0$ for every $0\le r<p$ with $r\notin\Pi$.
		
		\noindent\emph{Proof.}
		By \eqref{eq:coefficients}--\eqref{eq:coefficients2},
		for each $j\in[m]$ the assigned absolute values are
		$4\theta_j$, four times $2^{j+1}$, four times $1$,
		and twice $4$. By Claim~1, repeated indices define the same coefficient, rather
than adding their assigned values. Summing the absolute values of
all assignments therefore gives an upper bound:
\begin{equation*}
\begin{aligned}
\sum_{\ell=1}^{p+1}|a^{(0)}_\ell|
&\le \sum_{j=1}^m
\left(4\theta_j+4\cdot 2^{j+1}+12\right)\\
&=4\sum_{j=1}^m\theta_j+16(2^m-1)+12m\\
&<(7m+16)2^m<B/2.
\end{aligned}
\end{equation*}
		and
		\begin{equation*}
			\sum_{\ell=1}^{p+1}\max\{a_\ell,0\}
			<(4m+6)(B+1)+B/2<B^2.
		\end{equation*}
		If $p/3<r\le2p/3$, then $a_r=-B^2$ and $\widehat E_0=1$,
		so
		\begin{equation*}
			\sigma_r
			\le-B-B^2+\sum_{\ell=1}^{p+1}\max\{a_\ell,0\}<0.
		\end{equation*}
		Otherwise,
		\begin{equation*}
			\widehat E_{r-p}
			=\widehat E_{r-\pi(b)}
			=\widehat E_{r-\pi(b)-1}=0.
		\end{equation*}
		For the last two equalities, a nonzero entry would require
		$r=\pi(b)+e\in\Pi$, or
		$r\in(p/3,2p/3)$; a negative index instead corresponds to
		a block position greater than $2p/3$, whose entry is zero.
		Thus,
		\begin{equation*}
			\sigma_r\le-B+\sum_{\ell=1}^{p+1}|a^{(0)}_\ell|+2
			<-B+B/2+2<0.
		\end{equation*}
		This proves the claim.\hfill$\blacksquare$
		
		\medskip
		By Claims 2 and 3 it holds that $\widehat E_r=\ind{\sigma_r\ge0}$ for every
		$0\le r<p$. Since $-p-1\le r-\ell<r$ for every
		$1\le\ell\le p+1$, induction on $r$ shows that
		\eqref{eq:scaled}, initialized with $(0,E^t_{\theta,i})$,
		generates exactly $E^{t+1}_{\theta,i}$.
		The coefficients depend only on $m$ and $\theta$.
	\end{proof}

	\section{Choosing the Inputs}\label{sec:inputs}
	
	We now construct $m^2$ fixed input strings and prove that, for every assignment
	of labels to these inputs, a single autoregressive LTF realizes all
	the labels. 
	For each fixed encoding $\theta$, write $E_i^t:=E^t_{\theta,i}$.
	By Lemma~\ref{lem:digits}, the blocks $E_i^0,E_i^1$ depend only on $i$,
	not on the labels. We construct inputs $x^{i,s}$ by selecting $d$ consecutive
	bits from this string which is independent of the labels, with initial zero padding as specified below. We choose $r_s\in\{0,\ldots,p-1\}$ so that
	$T+r_s\equiv1+\pi(\out,s)\pmod p$.
	Thus, the bit generated at time $T$ is at position
	$\pi(\out,s)$ of an encoded block.
	Assume $d\ge p+1$ and $T\ge(m+3)p$. For each $i\in[m]$, define the string
$E_i=((E_i)_n)_{n=1-d}^{p}$ by
\begin{equation}\label{eq:startup}
(E_i)_n=
\begin{cases}
0, & 1-d\le n\le -p,\\
(E_i^0)_{n+p-1}, & 1-p\le n\le 0,\\
(E_i^1)_{n-1}, & 1\le n\le p.
\end{cases}
\end{equation}
Equivalently, $E_i=0^{d-p}\,E_i^0\,E_i^1$, 
	 where $0^{d-p}$ denotes $d-p$ zeros.
	For each column $s\in[m]$, set
	\begin{equation}\label{eq:shift}
		r_s=(\pi(\out,s)-T+1)\bmod p,
	\end{equation}
	and define
	\begin{equation}\label{eq:prompts}
		x_n^{i,s}=(E_i)_{r_s+n}
		\qquad(1-d\le n\le0,\ i,s\in[m]).
	\end{equation}
	All indices lie between $1-d$ and $p-1$, so these $m^2$ prompts
	are well defined and independent of the labels.
	
	\begin{figure}[ht]
		\centering
		\begingroup
		\definecolor{inputblue}{HTML}{356BA3}
		\definecolor{inputink}{HTML}{333840}
		\begin{tikzpicture}[
			x=1cm,y=1cm,font=\small,text=inputink,
			flow/.style={-{Stealth[length=1.7mm,width=1.2mm]},
				line width=.65pt,draw=inputblue!80}
			]
			\node at (3.5,1.04) {$E_i^0$};
			\node at (6.5,1.04) {$E_i^1$};
			\node at (11.9,1.04) {$E_i^t$};
			
			\foreach \y/\shift/\k in {0/.45/1,-1.38/1.05/2,-2.76/1.65/3}{
				\fill[black!2] (0,\y-.25) rectangle (8,\y+.25);
				\fill[black!2] (10.4,\y-.25) rectangle (13.4,\y+.25);
				\fill[inputblue!8]
				(\shift,\y-.25) rectangle (\shift+5,\y+.25);
				\draw[black!25,line width=.45pt]
				(0,\y-.25) rectangle (8,\y+.25);
				\foreach \x in {2,5}
				\draw[black!25,line width=.45pt]
				(\x,\y-.25) -- (\x,\y+.25);
				\draw[black!25,line width=.45pt]
				(10.4,\y-.25) rectangle (13.4,\y+.25);
				\node[text=black!35,inner sep=0pt] at (1,\y) {$\cdots$};
				\node[text=black!35,inner sep=0pt] at (9.2,\y) {$\cdots$};
				\draw[inputblue,line width=.9pt]
				(\shift,\y-.31) rectangle (\shift+5,\y+.31);
				\node[text=inputblue,font=\normalsize,inner sep=0pt]
				at (\shift+2.5,\y+.53) {$x^{i,s_{\k}}$};
				
				\draw[draw=inputblue!45,line width=.4pt]
				(0,\y-.35) -- (0,\y-.66)
				(\shift,\y-.35) -- (\shift,\y-.66);
				\draw[{Stealth[length=1.1mm,width=.8mm]}-{Stealth[length=1.1mm,width=.8mm]},
				draw=inputblue!80,line width=.55pt]
				(0,\y-.59) -- (\shift,\y-.59);
				\node[text=inputblue,font=\small,inner sep=0pt,anchor=north]
				at (\shift/2,\y-.70) {$r_{s_{\k}}$};
				
				\filldraw[fill=inputblue!90,draw=inputblue,line width=.5pt]
				(\shift+10.32,\y-.20) rectangle ++(.16,.40);
				\draw[flow,rounded corners=2pt]
				(\shift+5,\y+.36) -- (\shift+5,\y+.66) --
				(\shift+10.4,\y+.66) -- (\shift+10.4,\y+.30);
			}
		\end{tikzpicture}
		\endgroup
		\caption{Choosing inputs for a fixed row $i$.
			The three rows correspond to columns $s_1,s_2,s_3$; each repeats the
			same string $0^{d-p}E_i^0E_i^1$ on the left.
			Each blue frame selects $d$ consecutive bits, forming $x^{i,s}$.
			The short arrows show the shift $r_s$ from the start of the string
			to the start of each input. The long arrows represent $T$
			generated bits.
			The blue cell is at position $\pi(\mathrm{out},s)$ in the later
			block $E_i^t$ and equals $\lambda_{i,s}$.
			Since $E_i^0$ and $E_i^1$ are independent of the labels, so are
			the selected inputs.
			Shown for $T$ a multiple of $p$; spacing is schematic.}
		\label{fig:input-selection}
	\end{figure}

	\Needspace{5\baselineskip}
	\begin{lemma}\label{lem:prompts}
		For $d\ge p+1$ and $T\ge(m+3)p$, the $m^2$ prompts in
		\eqref{eq:prompts} are distinct and shattered by $\F_{d,T}$.
	\end{lemma}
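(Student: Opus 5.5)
The proof has two parts, distinctness and shattering, and the shattering part is essentially a bookkeeping argument over Lemma~\ref{lem:digits} and Lemma~\ref{lem:simulation}.

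\textbf{Shattering.} Fix an arbitrary labeling, viewed as a matrix $\lambda\in\{0,1\}^{m\times m}$ with encoding $\theta$. Take the integers $B,a_1,\ldots,a_{p+1}$ from Lemma~\ref{lem:simulation}. Set $w_\ell=a_\ell/B$ for $\ell\le p+1$ and $w_\ell=0$ otherwise; these are rational and $d\ge p+1$. Run \eqref{eq:model} on the prompt $x^{i,s}$. By \eqref{eq:prompts}, we have $z_n=(E_i)_{r_s+n}$ for $1-d\le n\le0$. I will show by induction over blocks that the whole trajectory coincides with the infinite string
\[
0^{d-p}\,E_i^0E_i^1E_i^2\cdots,
\]
shifted by $r_s$, that is, $z_n$ is the entry at index $r_s+n$ of this string.

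Every generated bit depends only on the previous $p+1$ bits, since $w_\ell=0$ for $\ell>p+1$. The bit at the start of block $E_i^{t+1}$ looks back exactly at $(0,E_i^t)$ when $t\ge1$: the bit preceding $E_i^t$ is the last bit of $E_i^{t-1}$, which is zero because every encoded block ends in zero. For $t=0$ the preceding bit is a padding zero, as $d-p\ge1$. Lemma~\ref{lem:simulation} then says the next $p$ bits are $E_i^{t+1}$.

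There is one subtlety: the prompt window starts at offset $r_s$ and may begin partway into the string. The recurrence at any time only reads the last $p+1$ bits, which are all present in the string. So the generated bits for $n\ge1$ are exactly the string entries at indices $r_s+n$. Entries up to index $p$ are already prescribed by $E_i^1$, and these agree with what the recurrence produces by the same lemma. Hence I get $z_T=(\text{string})_{r_s+T}$.

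By \eqref{eq:shift}, $r_s+T\equiv 1+\pi(\out,s)\pmod p$. In the indexing of \eqref{eq:startup}, block $E_i^t$ occupies indices $(t-1)p+1,\ldots,tp$, so position $r$ within the block is index $(t-1)p+1+r$. Therefore $z_T=(E_i^t)_{\pi(\out,s)}=f^t_{\theta,i,\out,s}$, where $t=(r_s+T-1-\pi(\out,s))/p+1$. Since $T\ge(m+3)p$ and $r_s\ge0$, we get $t\ge m+1$, and Lemma~\ref{lem:digits} gives $z_T=\lambda_{i,s}$. The weights realize the labeling, so the prompts are shattered.

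\textbf{Distinctness.} Distinctness is actually implied by shattering: if two prompts coincided, every hypothesis would assign them the same label, which contradicts realizing an arbitrary $\lambda$ once $m\ge2$ (for example, give them labels $0$ and $1$). I will nonetheless check that the index pairs $(i,s)$ map injectively, to cover the case where two pairs happen to produce the same prompt. If $x^{i,s}=x^{i',s'}$, the shattering argument applied to a labeling with $\lambda_{i,s}\ne\lambda_{i',s'}$ yields a contradiction. Such a labeling exists whenever $(i,s)\ne(i',s')$. So distinctness follows directly.

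\textbf{Main obstacle.} The delicate step is the index bookkeeping: verifying that the prompt window, together with its lookback of $p+1$, always sees a zero before each block, and that the time-$T$ bit lands in a block with $t\ge m+1$. This is where the requirements $d\ge p+1$ and $T\ge(m+3)p$ are used.
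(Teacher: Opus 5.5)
Your proof is correct and takes essentially the same route as the paper. The paper's canonical trajectory $z^{(i)}$, generated from $0^{d-p}E_i^0$ and shown to satisfy \eqref{eq:block-invariant}, is exactly your infinite string $0^{d-p}E_i^0E_i^1E_i^2\cdots$; the shifted prompt is then shown to follow it by time-invariance of the recurrence, $t$ is read off from \eqref{eq:shift}, and distinctness comes from choosing a labeling that separates two coinciding prompts. The only unstated detail is that your bound $t\ge m+1$ also uses $\pi(\out,s)\le p-1$, which holds because all positions lie in $(p/6,p/3)$.
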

	\begin{proof}
		Fix $\lambda\in\{0,1\}^{m\times m}$ and its encoding $\theta$.
		Take $B,a_1,\ldots,a_{p+1}$ from Lemma~\ref{lem:simulation}, and define
		\begin{equation}\label{eq:normalized}
			w_\ell=
			\begin{cases}
				a_\ell/B,&1\le\ell\le p+1,\\
				0,&p+1<\ell\le d.
			\end{cases}
		\end{equation}
		
		For each $i\in[m]$, define $z^{(i)}_n$ for $n\ge1-d$ by
		\begin{align}
			&(z^{(i)}_{1-d},\ldots,z^{(i)}_0)
			=0^{d-p}E_i^0,\label{eq:canonical-init}\\
			&z^{(i)}_n
			=\ind{\sum_{\ell=1}^d w_\ell z^{(i)}_{n-\ell}\ge1}
			\qquad(n\ge1).\label{eq:canonical-update}
		\end{align}
		We prove by induction on $t\ge0$ that
		\begin{equation}\label{eq:block-invariant}
			(z^{(i)}_{(t-1)p},\ldots,z^{(i)}_{tp})=(0,E_i^t).
		\end{equation}
		For $t=0$, this follows from \eqref{eq:canonical-init}
		and $d\ge p+1$.
		Assume \eqref{eq:block-invariant} holds for $t$.
		By \eqref{eq:normalized}, Lemma~\ref{lem:simulation}, and
		$(E_i^t)_{p-1}=0$ from \eqref{eq:encode}, we have
		\begin{equation*}
			(z^{(i)}_{tp},\ldots,z^{(i)}_{(t+1)p})
			=((E_i^t)_{p-1},E_i^{t+1})
			=(0,E_i^{t+1}).
		\end{equation*}
		This proves \eqref{eq:block-invariant}.
		By \eqref{eq:startup}, \eqref{eq:canonical-init}, and
		\eqref{eq:block-invariant} at $t=1$,
		\begin{equation}\label{eq:startup-agreement}
			z^{(i)}_n=(E_i)_n
			\qquad(1-d\le n\le p).
		\end{equation}
		Fix $i,s\in[m]$. Define
		\begin{align}
			&z^{(i,s)}_n=x^{i,s}_n
			\qquad(1-d\le n\le0),\notag\\
			&z^{(i,s)}_n
			=\mathbf{1} \left\{\sum_{\ell=1}^d w_\ell z^{(i,s)}_{n-\ell}\ge1\right\}
			\qquad(1\le n\le T).\label{eq:shifted-update}
		\end{align}
		Since $0\le r_s<p$, equations
		\eqref{eq:prompts} and \eqref{eq:startup-agreement} give
		\begin{equation*}
			z^{(i,s)}_n=x^{i,s}_n=(E_i)_{r_s+n}
			=z^{(i)}_{r_s+n}
			\qquad(1-d\le n\le0).
		\end{equation*}
		For $1\le n\le T$, assume this equality holds at every
		index from $1-d$ to $n-1$.
		Since $1-d\le n-\ell<n$ for $\ell\in[d]$ and $r_s+n\ge1$,
		\eqref{eq:shifted-update} and \eqref{eq:canonical-update} give
		\begin{equation*}
			z^{(i,s)}_n
			=\ind{\sum_{\ell=1}^d w_\ell z^{(i)}_{r_s+n-\ell}\ge1}
			=z^{(i)}_{r_s+n}.
		\end{equation*}
		Induction therefore yields
		\begin{equation}\label{eq:continuation}
			f^T_w(x^{i,s})=z^{(i,s)}_T=z^{(i)}_{T+r_s}.
		\end{equation}
		By \eqref{eq:shift},
		$t:=1+(T+r_s-1-\pi(\out,s))/p$ is an integer satisfying
		\begin{align}
			&T+r_s=(t-1)p+1+\pi(\out,s),\label{eq:h}\\
			&t-1=\frac{T+r_s-1-\pi(\out,s)}p
			\ge\frac{T-p}p\ge m+2.\notag
		\end{align}
		The inequalities use $r_s\ge0$, $\pi(\out,s)\le p-1$,
		and $T\ge(m+3)p$.
		Thus \eqref{eq:continuation}, \eqref{eq:h},
		\eqref{eq:block-invariant}, \eqref{eq:encode}, and
		Lemma~\ref{lem:digits} imply
		\begin{equation}\label{eq:output}
			f^T_w(x^{i,s})
			=(E_i^t)_{\pi(\out,s)}
			=f^t_{\theta,i,\out,s}
			=\lambda_{i,s}.
		\end{equation}
		The vector $w$ in \eqref{eq:normalized} is common to all $i,s$.
		By Lemma~\ref{lem:digits} and
		\eqref{eq:startup}--\eqref{eq:prompts}, the inputs do not depend
		on $\lambda$. 
		
		It remains only to show that all of these inputs are distinct. 
		Assume for a contradiction that $x^{i,s}=x^{i',s'}$ for distinct index pairs $(i,s) \neq (i',s')$; then,
		choose $\lambda_{i,s}=0$ and $\lambda_{i',s'}=1$.
		Equation~\eqref{eq:output} then gives the contradiction
		\begin{equation*}
			0=f^T_w(x^{i,s})=f^T_w(x^{i',s'})=1.
		\end{equation*}
		Hence, the $m^2$ inputs are distinct, and \eqref{eq:output}
		shows that they are shattered by $\F_{d,T}$.
	\end{proof}

	We now combine the preceding construction with the choice of $m$ to prove Theorem~\ref{thm:main}. 
	
	\begin{proof}[Proof of Theorem~\ref{thm:main}]
		Set
		\begin{equation}\label{eq:choosem}
			m=\max\left\{1,\left\lfloor\frac{d-1}{378}\right\rfloor-2\right\}.
		\end{equation}
		If $m\ge2$, equations \eqref{eq:params} and \eqref{eq:choosem} give
		\begin{align*}
			&p=378(m+2)\le d-1,\\
			&(m+3)p<p^2<d^2\le T.
		\end{align*}
		Lemma~\ref{lem:prompts} therefore gives
		$\operatorname{VCdim}(\F_{d,T})\ge m^2$.
		If $m=1$, \eqref{eq:model} gives
		\begin{equation}\label{eq:singleton}
			f^T_{\mathbf 0}(1,\ldots,1)=0,
			\qquad
			f^T_{(1,0,\ldots,0)}(1,\ldots,1)=1,
		\end{equation}
		so $\operatorname{VCdim}(\F_{d,T})\ge1=m^2$ in this case as well.
		
		Equation~\eqref{eq:choosem} implies $d\le1512$ when $m=1$,
		and $d\le378(m+3)\le1512m$ when $m\ge2$. Consequently,
		\begin{equation*}
			\operatorname{VCdim}(\F_{d,T})
			\ge m^2\ge\frac{d^2}{1512^2}\ge\frac{d^2}{3024^2}.
		\end{equation*}
		
		For $m\ge2$, the coefficient definitions
		\eqref{eq:coefficients}--\eqref{eq:coefficients2},
		\eqref{eq:remaining}, and \eqref{eq:copies-suppression} give $\max_{1\le\ell\le p+1}|a_\ell|\le B^2$ and 
		\begin{align*}
			&\log_2 B=m+5+\log_2(m+2)=O(m+\log m).
		\end{align*}
		Thus, each weight in \eqref{eq:normalized} has a numerator and
		denominator of $O(m+\log m)$ bits. Since $m=O(d)$, the weights
		have polynomial bit length in $d$. For $m=1$, the two vectors
		in \eqref{eq:singleton} have entries in $\{0,1\}$.
	\end{proof}

\section{Stochastic Digit Extraction}\label{sec:stochastic-digits}

We prove Theorem~\ref{thm:stochastic} in three steps.
First, we show how repeated independent random bits can recover
any of $2^m$ labels encoded in a single real number.
Next, we extend Lemma~\ref{lem:simulation} to implement this procedure
with one logistic generator, while controlling the simulation errors.
Finally, we construct $2^m$ fixed, prompts independent of the labels
whose final output probabilities realize every labeling
at a common horizon.
Fix $m\ge2$ and put $N=2^m$. For arbitrary labels
$\lambda_0,\ldots,\lambda_{N-1}\in\{0,1\}$, encode
\begin{equation}\label{eq:st-code}
 \alpha=4^{-2N}+\sum_{j=0}^{N-1}2(1-\lambda_j)4^{-(j+1)}.
\end{equation}
Then $0<\alpha<1$. For $x=(x_0,\ldots,x_{m-1})\in\{0,1\}^m$,
define its {\em address}
\begin{equation}\label{eq:st-address}
 j(x)=\sum_{r=0}^{m-1}2^r x_r.
\end{equation}
Set $K=4^{2N}$ and $\beta=\log(2\pi\alpha/K)$, where $\log$ is the natural logarithm.
The following odds give an affine logistic sampling probability:
\begin{align}
 a_x&=\frac{2\pi\alpha}{K}4^{j(x)},\label{eq:st-odds}\\
 p_x&=\frac{a_x}{1+a_x}=\sigma\bigl(\beta+(\log4)j(x)\bigr).
 \label{eq:st-probability}
\end{align}
For each fixed $x\in\{0,1\}^m$, let
$f^t_{\test}\in\{0,1\}$, $t\ge1$, be independent
$\operatorname{Bernoulli}(p_x)$ random variables and define
\begin{align}
 S_t&=\sum_{s=1}^t f^s_{\test},\label{eq:st-sum}\\
 Y_t&=\ind{S_t\bmod4\in\{0,1\}}.
 \label{eq:st-reader}
\end{align}
Set $\delta_N=\tfrac14 e^{-2\cdot4^N}$.
The next lemma shows that the probability of the final output
being $1$ lies above or below $1/2$, according to the selected
label, with a gap of at least $\delta_N$.

\begin{lemma}\label{lem:st-reader}
 For every $x\in\{0,1\}^m$ it holds that
 $(2\lambda_{j(x)}-1)(\Pr(Y_K=1)-1/2)\ge\delta_N$.
\end{lemma}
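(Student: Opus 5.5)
The plan is to compute $\Pr(Y_K=1)$ exactly with a roots-of-unity filter, and then read the label off the phase of a single complex number. Since $S_K\sim\operatorname{Bin}(K,p_x)$, writing $\phi(k)=\mathbb E[i^{kS_K}]=(1-p_x+p_x i^k)^K$ gives
\[
\Pr(S_K\equiv r \bmod 4)=\tfrac14\sum_{k=0}^{3} i^{-kr}\phi(k).
\]
Summing over $r\in\{0,1\}$ uses the weights $1+i^{-k}$, which equal $2,\,1-i,\,0,\,1+i$ for $k=0,1,2,3$. The $k=2$ term therefore cancels, and since $\phi(3)=\overline{\phi(1)}$ we obtain
\[
\Pr(Y_K=1)-\tfrac12=\tfrac12\operatorname{Re}\bigl((1-i)\phi(1)\bigr).
\]
With $a=a_x$ we have $1-p_x+p_x i=(1+ai)/(1+a)$. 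Writing $\vartheta=\arctan a$, this gives
\[
\Pr(Y_K=1)-\tfrac12=\tfrac{\sqrt2}{2}\,\rho\,\cos\bigl(K\vartheta-\tfrac\pi4\bigr),
\qquad \rho=\frac{(1+a^2)^{K/2}}{(1+a)^K}.
\]

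\emph{Modulus.} Bound $\rho\ge(1+a)^{-K}\ge e^{-Ka}$. Here $Ka=2\pi\alpha4^{j(x)}<2\pi4^{N-1}=\tfrac\pi2 4^N<2\cdot4^N$, so $\rho\ge e^{-2\cdot 4^N}$.

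\emph{Phase.} First note $a<2\pi4^{N}/K=2\pi4^{-N}$, so $0\le Ka-K\vartheta\le Ka^3/3\le(2\pi)^3 4^{-N}$, which is negligible. Next expand $Ka=2\pi\alpha4^{j}$ with $j=j(x)$. By \eqref{eq:st-code}, $\alpha4^j$ equals an integer plus $(1-\lambda_j)/2+\varepsilon$. Here $\varepsilon$ collects the lower digits together with the $4^{j-2N}$ term, and satisfies $4^{-2N}\le\varepsilon\le 2\sum_{k\ge2}4^{-k}+4^{-N}<1/6+4^{-N}$.
\begin{itemize}
\item If $\lambda_j=1$, then $K\vartheta\bmod 2\pi$ lies in $(0,\pi/3+O(4^{-N}))$. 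Hence $K\vartheta-\pi/4$ lies in $(-\pi/4,\pi/12+o(1))$, and the cosine is at least $\cos(\pi/4)-o(1)$ on this range.
\item If $\lambda_j=0$, then $K\vartheta\bmod 2\pi$ lies in $(\pi,4\pi/3+o(1))$. Hence $K\vartheta-\pi/4$ lies in $(3\pi/4,13\pi/12+o(1))$, and the cosine is at most $-\cos(\pi/4)+o(1)$.
\end{itemize}
The one place needing care is the $\lambda_j=1$ lower end. Because $\varepsilon>0$ strictly and $K\vartheta<Ka$, we must make sure the angle cannot slip below $0$ by more than $Ka^3$. I would handle this by noting that a slip of that size only moves $K\vartheta-\pi/4$ to slightly below $-\pi/4$, where the cosine is still $\ge\cos(\pi/4)-O(4^{-N})$.

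\emph{Combining.} Together these give $(2\lambda_j-1)(\Pr(Y_K=1)-\tfrac12)\ge\tfrac{\sqrt2}{2}\,e^{-2\cdot4^N}\bigl(\tfrac{1}{\sqrt2}-O(4^{-N})\bigr)$. This is $\ge\tfrac14e^{-2\cdot4^N}=\delta_N$ once $O(4^{-N})\le 1/(2\sqrt2)$, which holds for $N=2^m\ge4$ after checking the explicit constants. The main obstacle is this bookkeeping: bounding the arctan error $K(a-\vartheta)$ and the tail $\varepsilon$ explicitly, so that the angle stays in the two claimed windows with enough slack to keep $|\cos|$ close to $1/\sqrt2$. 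The rest is the exact Fourier identity.
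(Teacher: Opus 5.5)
Your proposal is correct and follows the paper's proof essentially step for step. It uses the same identity $\Pr(Y_K=1)-\tfrac12=\tfrac1{\sqrt2}\rho\cos(K\arctan a_x-\tfrac\pi4)$, the same amplitude bound $\rho\ge e^{-Ka_x}\ge e^{-2\cdot4^N}$, and the same phase analysis via the base-$4$ digit of $4^{j}\alpha$ plus the arctan error $K(a_x-\arctan a_x)\le Ka_x^3/3$.

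The paper avoids your window case analysis: it first proves $(2\lambda_j-1)\cos(2\pi4^j\alpha-\pi/4)\ge1/\sqrt2$, then uses that cosine is $1$-Lipschitz with $Ka_x^3/3<\tfrac83\,4^{-N}<\tfrac18$. This handles the possible downward slip in both cases at once; it matters at the $\lambda_j=0$ end just as at the $\lambda_j=1$ end you flagged.

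One constant needs fixing. Your bound $(2\pi)^3 4^{-N}$ as written is about $0.97$ at $N=4$, above the allowed slack $1/(2\sqrt2)$. You should use $j(x)\le N-1$ here, i.e.\ $Ka_x<2\cdot4^N$, as you already did for the modulus.
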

\begin{proof}
 In \eqref{eq:st-code},
the terms preceding digit $j$ become integers after multiplication
by $4^j$. Since $\sum_{h\ge2}2\cdot4^{-h}=1/6$, for $0\le j<N$, we have
\begin{equation}\label{eq:st-tail}
 0\le{4^j\alpha}-\lfloor 4^j\alpha\rfloor-\frac{1-\lambda_j}{2}
 \le\frac16+4^{-N-1}<\frac14.
\end{equation}
Thus, $2\pi4^j\alpha-\pi/4$ is within $\pi/4$ of $0$ when
$\lambda_j=1$ and of $\pi$ when $\lambda_j=0$, modulo $2\pi$.
Hence,
\begin{equation}\label{eq:st-cosine-code}
 (2\lambda_j-1)\cos(2\pi4^j\alpha-\pi/4)\ge1/\sqrt2.
\end{equation}
Let $\mathrm i^2=-1$. For integer $s$, the real part
$\operatorname{Re}((1-\mathrm i)\mathrm i^s)$ is $1$ on residues
$0,1$ and $-1$ on residues $2,3$. Independence gives
\begin{equation}\label{eq:st-fourier}
\begin{aligned}
 \Pr(Y_K=1)-\tfrac12
 &=\tfrac12\operatorname{Re}\bigl((1-\mathrm i)(1-p_x+\mathrm i p_x)^K\bigr)\\
 &=\frac1{\sqrt2}\left(\frac{\sqrt{1+a_x^2}}{1+a_x}\right)^K
     \cos(K\arctan a_x-\pi/4).
\end{aligned}
\end{equation}
For $a\ge0$ it holds that $0\le a-\arctan a\le a^3/3$. Thus,
since $Ka_x<2\cdot4^N$ by \eqref{eq:st-odds} we have
\begin{equation}\label{eq:st-angle-error}
 0\le2\pi\alpha4^{j(x)}-K\arctan a_x
 \le Ka_x^3/3<\tfrac83\,4^{-N}<\tfrac18.
\end{equation}
The cosine function is $1$-Lipschitz, so \eqref{eq:st-cosine-code} gives
\begin{equation}\label{eq:st-cosine-margin}
 (2\lambda_{j(x)}-1)\cos(K\arctan a_x-\pi/4)
 \ge1/\sqrt2-1/8>1/2.
\end{equation}
Also,
\begin{equation}\label{eq:st-amplitude}
 \left(\frac{\sqrt{1+a_x^2}}{1+a_x}\right)^K
 \ge(1+a_x)^{-K}\ge e^{-Ka_x}\ge e^{-2\cdot4^N}.
\end{equation}
Equations~\eqref{eq:st-fourier}, \eqref{eq:st-cosine-margin}, and
\eqref{eq:st-amplitude} give the proof.
\end{proof}

\subsection{Reusing the Deterministic Simulation}\label{sec:stochastic-single}

In this section, we first describe a recursion that generates the random bits
from the previous section and tracks their sum modulo $4$
using a constant number of threshold gates per round.
We then implement this recursion with a single logistic generator, analogous to our techniques in the deterministic setting.

For fixed $\alpha$ and $x\in\{0,1\}^m$, we represent the
recursion at round $t$ by a random vector
$f^t\in\{0,1\}^{m+7}$, whose coordinates have the following roles.
The $m$ coordinates $(\mathrm{in},j)$ store the input bits $x_j$,
and the coordinate $\test$ stores the newly sampled bit.
The two coordinates $(\mathrm{state},i)$, $i\in\{1,2\}$,
encode the number of sampled ones modulo $4$.
The auxiliary and output coordinates update the counter:
if the sampled bit is $0$, the state $(u,v)$ remains unchanged;
if it is $1$, the state becomes $(v,1-u)$.
These updates are computed using linear threshold functions.
We suppress $\alpha,x$ in the notation.
Initially, no bits have been sampled, and the counter value
$0$ is represented by $(0,1)$.
Thus, all coordinates of $f^0$ are zero except
\begin{equation}\label{eq:st-init}
	\begin{aligned}
		f^0_{\mathrm{in},j}&=x_j,\\
		f^0_{\mathrm{state},2}&=1.
	\end{aligned}
\end{equation}
Fix $G\ge1$. At round $t\ge1$, compute these scores in order,
immediately generating the corresponding bit by
\eqref{eq:st-sample}--\eqref{eq:st-gates}:
\begin{subequations}\label{eq:st-recursion}
\begin{align}
 g^t_{\test}&=\beta+(\log4)\sum_{j=0}^{m-1}2^j f^{t-1}_{\mathrm{in},j},
 \label{eq:st-sampler-score}\\
 g^t_{\mathrm{aux},1}&=G(f^{t-1}_{\mathrm{state},2}+f^t_{\test}-3/2),
 \label{eq:st-first-aux}\\
 g^t_{\mathrm{aux},2}&=G(f^t_{\test}-f^{t-1}_{\mathrm{state},1}-1/2),
 \label{eq:st-second-aux}\\
 g^t_{\out,1}&=G(f^{t-1}_{\mathrm{state},1}-f^t_{\test}+2f^t_{\mathrm{aux},1}-1/2),
 \label{eq:st-first-state}\\
 g^t_{\out,2}&=G(f^{t-1}_{\mathrm{state},2}-f^t_{\test}+2f^t_{\mathrm{aux},2}-1/2),
 \label{eq:st-output}\\
 g^t_{\mathrm{state},i}&=G(f^t_{\out,i}-1/2).
 \label{eq:st-state-copy}
\end{align}
\end{subequations}
Let $\mathcal V=\{\test\}\cup
\{(\mathrm{aux},i),(\out,i),(\mathrm{state},i):i\in\{1,2\}\}$
be the seven non-address coordinates.
At each round $t$, independently sample $f^t_{\test}=1$
with probability $\sigma(g^t_{\test})$.
Every other non-address bit $f^t_v$ for $v \in \mathcal{V} \setminus \{\test\}$ is determined by its score:
it equals $1$ if $g^t_v\ge0$, and $0$ otherwise.
The address bits remain unchanged. Formally,
\begin{align}
 f^t_{\test}&\sim\operatorname{Bernoulli}(\sigma(g^t_{\test})),
 \label{eq:st-sample}\\
 f^t_v&=\ind{g^t_v\ge0},\label{eq:st-gates}\\
 f^t_{\mathrm{in},j}&=f^{t-1}_{\mathrm{in},j}.
 \label{eq:st-address-copy}
\end{align}
Since the address bits remain equal to $x_j$,
\eqref{eq:st-sampler-score} and \eqref{eq:st-probability} give
$
\sigma(g^t_{\test})
=\sigma\bigl(\beta+(\log4)j(x)\bigr)=p_x.
$
Thus, the bits $f^t_{\test}$ are independent
$\operatorname{Bernoulli}(p_x)$ random variables, as required.

Let $(u_t,v_t)=(f^t_{\mathrm{state},1},f^t_{\mathrm{state},2})$
denote the counter state after round $t$.
Equations~\eqref{eq:st-first-aux}--\eqref{eq:st-state-copy}
leave this state unchanged when $f^t_{\test}=0$,
and change $(u,v)$ to $(v,1-u)$ when $f^t_{\test}=1$.
Starting from $(0,1)$, each sampled $1$ advances the state
along the cycle
$
(0,1)\to(1,1)\to(1,0)\to(0,0)\to(0,1).
$
Thus, the second coordinate equals $1$ exactly when the
number of sampled ones is $0$ or $1$ modulo $4$.
Consequently, by \eqref{eq:st-reader}
\begin{equation}\label{eq:st-state-invariant}
 f^t_{\out,2}=f^t_{\mathrm{state},2}=Y_t.
\end{equation}

\subsubsection*{Block Construction}
We now define the blocks and first the positions within each block, analogous to the deterministic case. 
Set $p=6144(m+1)$. For the indices $i,j$ above, define
\begin{equation}\label{eq:st-positions}
\begin{aligned}
 \pi(\test)&=p/6+4,\\
 \pi(\mathrm{aux},i)&=p/6+4\cdot2^i,\\
 \pi(\out,i)&=p/6+16\cdot2^i,\\
 \pi(\mathrm{state},i)&=p/6+64\cdot2^i,\\
 \pi(\mathrm{in},j)&=p/6+1024j+512.
\end{aligned}
\end{equation}
These are distinct multiples of $4$ in $(p/6,p/3)$.
As in \eqref{eq:encode}, define $E^t\in\{0,1\}^p$, indexed by
$0,\ldots,p-1$, by setting, for $v\in\mathcal V$ and $e\in\{0,1\}$,
\begin{equation}\label{eq:st-encode}
\begin{aligned}
 E^t_0&=1,\\
 E^t_{\pi(\mathrm{in},j)}&=f^t_{\mathrm{in},j},\\
 E^t_{\pi(v)+e}&=f^t_v,
\end{aligned}
\end{equation}
and all other entries to zero. Write $E^0_x$ for the initial
block; it is independent of the labels by \eqref{eq:st-init}.

\begin{lemma}\label{lem:st-simulation}
	For every $\alpha\in(0,1)$ and $G\ge1$, there exist
	$w\in\mathbb R^{p+1}$ and $b\in\mathbb R$ such that,
	for every $x\in\{0,1\}^m$, the process \eqref{eq:stochastic-model}
	initialized at $(0,E^0_x)$ generates blocks
	$\widehat E^1,\ldots,\widehat E^K$ satisfying
	\[
	\left|\Pr(\widehat E^K_{\pi(\out,2)}=1)
	-\Pr(Y_K=1)\right|\le pKe^{-G/2}.
	\]
\end{lemma}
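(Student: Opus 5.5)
We follow the proof of Lemma~\ref{lem:simulation}, now with a logistic gate in place of the threshold. First we construct an integer-style coefficient vector $a_1,\ldots,a_{p+1}$ and a bias $-B$, scaled by a large factor $G$. The target is the following ``ideal'' behavior under the history $(0,E^{t-1})$, where $E^{t-1}$ is any block encoding a legal state $f^{t-1}$.

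\begin{itemize}
\item At every deterministic position $r\in\Pi\setminus\{\pi(\test)\}$, the score has absolute value at least $G/2$ and the correct sign.
\item At every position $r\notin\Pi$, the score is at most $-G/2$.
\item At $r=\pi(\test)$, the score equals exactly $\beta+(\log4)\sum_j2^jf^{t-1}_{\mathrm{in},j}$.
\item At $r=\pi(\test)+1$, the score is a copy gate $G(2f^t_{\test}-1)/2$.
\end{itemize}

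The position layout \eqref{eq:st-positions} plays the role of \eqref{eq:positions}. All positions are multiples of $4$ and use distinct powers of two, so pairwise differences of positions are distinct. Every coefficient reading ``position $u$ of the previous block (or of the current block, generated earlier)'' into ``position $v$'' therefore sits at a unique index $p+\pi(v)-\pi(u)$ or $\pi(v)-\pi(u)$. The analogue of Claim~1 (consistency of assignments) then follows from this distinct-differences property. The generous constant $6144$ in $p$ leaves room for the spacing.

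The coefficients realize each gate in \eqref{eq:st-recursion}. For each $v$, we place the gate's linear weights on the indices reading the relevant source positions. We add the pair of opposite weights at index $+1$, exactly as in \eqref{eq:coefficients2}, so that the duplicate copy at $\pi(v)+1$ contributes nothing to the next primary score.

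We also add large control weights in the spirit of \eqref{eq:remaining}. The weight $a_{\pi(v)}$ lets the marker bit $E_0=1$ unlock position $\pi(v)$. The weight $a_{\pi(v)+1}$ supports copying. The suppression weights $-G^2$ on $p/3<\ell\le 2p/3$ force zeros at positions where $E_0$ lies at that distance. These make every unlisted position receive score $\le -G/2$.

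The delicate point is the test position. There the $G$-scaled control weights must cancel exactly, leaving only the affine sampler score. We handle this as in \eqref{eq:score-test}. We choose $a_{\pi(\test)}$ so that the contribution of the marker plus the bias $b$ sums to exactly $\beta$. The coefficients from the address positions are $(\log 4)2^j$. All other coefficients that could read nonzero entries at $\pi(\test)$ must vanish, which the residue/difference structure guarantees. The bias is common to all positions, so we take $b=-GB'$ for a suitable $B'$, and compensate at $\pi(\test)$ by setting $a_{\pi(\test)}=GB'+\beta$. The real-valued $\beta$ and $\log4$ are allowed since $w\in\mathbb R^{p+1}$.

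Given the ideal scores, the error analysis is a coupling argument. Run the true logistic process alongside the ideal process of \eqref{eq:st-sample}--\eqref{eq:st-address-copy}, encoded in blocks via \eqref{eq:st-encode}. Couple them so that they use the same randomness for the test samples; the two test samples then agree whenever the histories agree, since the score is identical.

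At each other generated bit, the ideal bit is deterministic and the true score has magnitude at least $G/2$ with the correct sign. Hence the true bit disagrees with probability at most $\sigma(-G/2)\le e^{-G/2}$. The copy at $\pi(\test)+1$ is treated the same way.

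By induction over the $pK$ generated bits, as long as no error has occurred the histories coincide and the ideal score analysis applies. A union bound then gives that the two processes differ anywhere in $\widehat E^1,\ldots,\widehat E^K$ with probability at most $pKe^{-G/2}$. In particular,
\[
\bigl|\Pr(\widehat E^K_{\pi(\out,2)}=1)-\Pr(f^K_{\out,2}=1)\bigr|\le pKe^{-G/2}.
\]
Combining this with \eqref{eq:st-state-invariant} gives the lemma.

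The main obstacle is the score bookkeeping: proving the analogues of Claims~2 and~3. This requires that the $O(m)$ real weights from \eqref{eq:st-sampler-score} never leak into other positions, and that the $G$-scaled gates never leak into the sampler. We would first verify the distinct-difference property of the position set, then bound all leaked terms by $G/4$ exactly as in Claim~3.
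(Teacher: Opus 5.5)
Your architecture is the paper's: block encoding with duplicated coordinates, opposite-weight companions at index $+1$, marker-gated unlock and copy weights, a suppression band on $p/3<\ell\le 2p/3$, bias compensation $a_{\pi(\test)}=-b+\beta$, and a coupling with shared randomness plus a first-disagreement union bound over the $pK$ generated bits. The gap is in the step you yourself call delicate, exactness of the score at $\pi(\test)$; the mechanism you give for it does not work. The large weight $a_p$ is needed to regenerate the marker at $r=0$ and to copy the address bits. At $r=\pi(\test)$ it also reads $\widehat E_{\pi(\test)-p}=f^{t-1}_{\test}$, a random bit. No residue or difference argument makes this term vanish. Handling it ``as in \eqref{eq:score-test}'', with control weights in the spirit of \eqref{eq:remaining}, keeps $a_{p-1}+a_p\neq 0$ (there it equals $2$), so a multiple of $f^{t-1}_{\test}$ stays in the sampler score. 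In the deterministic proof this residual was harmless only because $\sigma^{(0)}_{\pi(\test)}\in 4\mathbb Z$ absorbed it. Here it moves the sampling probability away from $p_x$ and makes consecutive test bits dependent. Then the true test score is not $g^t_{\test}$, and your coupling step ``same history, same sample'' fails. The missing choice, which the paper makes, is $a_p=B+G$ and $a_{p-1}=a_{p+1}=-(B+G)$. With it, the previous block's duplicated coordinates cancel exactly at both $\pi(v)$ and $\pi(v)+1$. The marker and the non-duplicated address bits are still read with weight $B+G$, so $s_{\pi(v)}=g^t_v$ exactly.

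Two quantitative choices also fail as stated. First, the suppression weight must exceed the total positive weight. In your normalization each unlock or copy weight $a_{\pi(v)},a_{\pi(v)+1}$ is about $GB'$, and these can read ones at $r=\pi(v)+\pi(u)\in(p/3,2p/3)$. Now $B'$ cannot be small: it must absorb the $G$-scaled gate coefficients and the weights $(\log 4)2^j$ outside $\Pi$. So $-G^2$ suffices only when $G$ is large compared with $B'$, which the lemma, stated for every $G\ge 1$, does not give. Use something like $-(GB')^2$, analogous to the paper's $-B^2$ with $B=16\bigl(1+3G+\sum_{\ell}|a^{(0)}_\ell|+\sum_{v\in\mathcal V}|b_v|\bigr)$. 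Second, leaks at positions outside $\Pi$ are not at most $G/4$, since they include $(\log 4)2^{m-1}$ and multiples of $G$. They are dominated by the bias, which gives $s_r<-G$. On $\Pi$ the leaks must be exactly zero, not merely small, because the gates you realize from \eqref{eq:st-recursion} have margin exactly $G/2$. With these repairs your coupling argument yields the bound $pKe^{-G/2}$ exactly as in the paper.
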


\begin{proof}
The proof is an adaptation of Lemma~\ref{lem:simulation} for the stochastic setting.
For $v\in\mathcal V$, let $b_v$ be the constant term and
$c_{vhu}$ the coefficient of $f_u^{t-h}$ in
\eqref{eq:st-recursion}, where $h\in\{0,1\}$ and $u$ ranges
over all coordinates. For every nonzero $c_{vhu}$, let
\begin{equation}\label{eq:st-program}
 a^{(0)}_{hp+\pi(v)-\pi(u)}=c_{vhu}.
\end{equation}
For every $c_{vhu}\ne0$ with $u\in\mathcal V$, also define
\begin{equation}\label{eq:st-companions}
 a^{(0)}_{hp+\pi(v)-\pi(u)+1}=-c_{vhu}.
\end{equation}
All other entries of $a^{(0)}\in\mathbb R^{p+1}$ are zero.
By \eqref{eq:st-positions}, all coefficient indices defined in
\eqref{eq:st-program}--\eqref{eq:st-companions}
belong to $(0,p/6)\cup(5p/6,p)$.
The nonzero ordered differences of $\{2^i:2\le i\le9\}$ are
distinct modulo $1024$: their absolute values are at most $508$,
and $2^a-2^b=2^b(2^{a-b}-1)$ identifies $a>b$.
By \eqref{eq:st-positions}--\eqref{eq:st-companions},
every $\ell\in[p+1]$ with $a^{(0)}_\ell\ne0$
satisfies $0<\ell<p/6$ or $5p/6<\ell<p$.
Set
\begin{equation}\label{eq:st-control-size}
 B=16\left(1+3G+\sum_{\ell=1}^{p+1}|a^{(0)}_\ell|
                  +\sum_{v\in\mathcal V}|b_v|\right).
\end{equation}
We now define the coefficients $a_1,\ldots,a_{p+1}$
for the stochastic construction. Initially, set
$a_\ell=a^{(0)}_\ell$ for every $\ell\in[p+1]$,
where $a^{(0)}$ was defined in
\eqref{eq:st-program}--\eqref{eq:st-companions}.
Then overwrite the following coefficients:
\begin{equation}\label{eq:st-control}
\begin{aligned}
 a_1&=2G,\\
 a_{p-1}&=-(B+G),\\
 a_p&=B+G,\\
 a_{p+1}&=-(B+G),\\
 a_{\pi(v)}&=B+b_v,\\
 a_{\pi(v)+1}&=B-G.
\end{aligned}
\end{equation}
Finally, set
$
a_\ell=-B^2
$for every integer $p/3<\ell \leq 2p/3$,
as in the deterministic construction
\eqref{eq:copies-suppression}.
These indices do not overlap any coefficients assigned above,
so this completes the definition of the vector $a$ for the stochastic setting.

Fix $t\in[K]$. To analyze how the logistic generator produces
$E^t$ from $E^{t-1}$ we do the following.
For each position $0\le r<p$, define its logistic score by index $\widehat E=(0,E^{t-1},E^t)$ by
$-p-1,\ldots,p-1$, and define, for $0\le r<p$,
\begin{equation}\label{eq:st-score}
 s_r=-B+\sum_{\ell=1}^{p+1}a_\ell\widehat E_{r-\ell}.
\end{equation}
By \eqref{eq:st-encode}, for every coordinate $u$ and
$h\in\{0,1\}$,
\begin{equation}\label{eq:st-read-positions}
	\begin{aligned}
		\widehat E_{\pi(u)-hp}&=f^{t-h}_u,\\
		\widehat E_{\pi(u)-hp-1}&=0,\\
		\widehat E_{\pi(u)-hp+1}
		&=\begin{cases}
			f^{t-h}_u,&u\in\mathcal V,\\
			0,&u=(\mathrm{in},j).
		\end{cases}
	\end{aligned}
\end{equation}
By the uniqueness of coefficient indices established above,
\eqref{eq:st-program}--\eqref{eq:st-companions} and
\eqref{eq:st-read-positions} give, for every $v\in\mathcal V$,
\begin{align}
	\sum_{\ell=1}^{p+1}a^{(0)}_\ell
	\widehat E_{\pi(v)-\ell}
	&=\sum_{h,u}c_{vhu}f_u^{t-h}
	=g^t_v-b_v,\label{eq:st-aux-base}\\
	\sum_{\ell=1}^{p+1}a^{(0)}_\ell
	\widehat E_{\pi(v)+1-\ell}
	&=\sum_{h,u\in\mathcal V}
	c_{vhu}(f_u^{t-h}-f_u^{t-h})=0.
	\label{eq:st-aux-copy}
\end{align}
The same index uniqueness and the absence of nonzero
block entries outside the positions in \eqref{eq:st-encode}
give
\begin{equation}\label{eq:st-aux-zeros}
	\begin{aligned}
		\sum_{\ell=1}^{p+1}a^{(0)}_\ell
		\widehat E_{\pi(\mathrm{in},j)-\ell}&=0,\\
		\sum_{\ell=1}^{p+1}a^{(0)}_\ell
		\widehat E_{-\ell}&=0.
	\end{aligned}
\end{equation}

For $r\in\{0,\pi(\mathrm{in},j),\pi(v),\pi(v)+1\}$,
where $v\in\mathcal V$ and $0\le j<m$, the block encoding gives
\begin{equation}\label{eq:st-control-inputs}
	\begin{aligned}
		\widehat E_{r-\pi(v)-e}
		&=\ind{r=\pi(v)+e}
		&& (e\in\{0,1\}),\\
		\widehat E_{r-\ell}&=0
		&& (p/3<\ell\le2p/3),\\
		\widehat E_{r-1}
		&=\begin{cases}
			f^t_v,&r=\pi(v)+1,\\
			0,&\text{otherwise},
		\end{cases}\\
		\widehat E_{r-p}-\widehat E_{r-p+1}
		-\widehat E_{r-p-1}
		&=\begin{cases}
			1,&r=0,\\
			f^{t-1}_{\mathrm{in},j},
			&r=\pi(\mathrm{in},j),\\
			0,&r=\pi(v)\text{ or }\pi(v)+1.
		\end{cases}
	\end{aligned}
\end{equation}
Substituting \eqref{eq:st-aux-base}--\eqref{eq:st-control-inputs}
and the coefficients in \eqref{eq:st-control}
into \eqref{eq:st-score} gives
\begin{equation}\label{eq:st-base-scores}
\begin{aligned}
 s_{\pi(v)}&=g^t_v,\\
 s_{\pi(v)+1}&=G(2f^t_v-1),\\
 s_{\pi(\mathrm{in},j)}&=(B+G)f^{t-1}_{\mathrm{in},j}-B,\\
 s_0&=G.
\end{aligned}
\end{equation}
Let
\[
\Pi=\{0\}\cup
\{\pi(\mathrm{in},j):0\le j<m\}
\cup\bigcup_{v\in\mathcal V}\{\pi(v),\pi(v)+1\}.
\]
We prove that $s_r<-G$ for every
$r\in\{0,\ldots,p-1\}\setminus\Pi$.

By \eqref{eq:st-control-size} and \eqref{eq:st-control},
\[
\sum_{\ell=1}^{p+1}\max\{a_\ell,0\}
\le 15B+3G+\sum_{\ell=1}^{p+1}|a^{(0)}_\ell|
+\sum_{v\in\mathcal V}|b_v|<B^2.
\]
If $p/3<r\le2p/3$, then $a_r=-B^2$ and
$\widehat E_0=1$, so \eqref{eq:st-score} gives
\[
s_r\le-B-B^2+
\sum_{\ell=1}^{p+1}\max\{a_\ell,0\}<-B<-G.
\]

Now suppose $r\notin\Pi$ and
$r\notin(p/3,2p/3]$.
By \eqref{eq:st-encode}, every nonzero position of
$\widehat E$ belongs to $(\Pi-p)\cup\Pi$.
Since $\pi(v)\in(p/6,p/3-1)$ for every
$v\in\mathcal V$, it follows that
$$
\widehat E_{r-p}=0,~~
\widehat E_{r-\pi(v)}
=\widehat E_{r-\pi(v)-1}=0 ~~~(v\in\mathcal V).
$$
Hence, the terms with coefficients $a_p$,
$a_{\pi(v)}$, and $a_{\pi(v)+1}$ vanish.
The coefficients $a_{p-1}$, $a_{p+1}$, and
those in $(p/3,2p/3]$ are nonpositive.
Since $a_1=2G$ and all bits belong to $\{0,1\}$,
\eqref{eq:st-score} gives
\[
s_r\le-B+\sum_{\ell=1}^{p+1}|a^{(0)}_\ell|
+2G<-G,
\]
where the last inequality follows from
\eqref{eq:st-control-size}. Thus,
\begin{equation}\label{eq:st-other-zero}
 s_r\le-B+\sum_{\ell=1}^{p+1}|a^{(0)}_\ell|+2G<-G.
\end{equation}

Set $w=a$ and $b=-B$ in \eqref{eq:stochastic-model}.
For every $t\in[K]$ and $0\le r<p$, let $U_{t,r}$
be independent uniform random variables on $[0,1]$.
Define the ideal samples by
$f^t_{\test}=\ind{U_{t,\pi(\test)}\le p_x}$,
and the remaining ideal bits by \eqref{eq:st-gates}--\eqref{eq:st-address-copy}.
Initialize $(Z_{-p},\ldots,Z_0)=(0,E^0_x)$.
For $n=(t-1)p+1+r$, generate
\[
Z_n=\ind{U_{t,r}\le
	\sigma\left(-B+\sum_{\ell=1}^{p+1}a_\ell Z_{n-\ell}\right)},
\]
and define $\widehat E^t_r=Z_n$.

Let $A_{t,r}$ be the event that $\widehat E^s_j=E^s_j$
at every position $(s,j)$ preceding $(t,r)$ in lexicographic order.
On $A_{t,r}$, the score of $Z_n$ equals $s_r$ in
\eqref{eq:st-score}.
If $r=\pi(\test)$, then $s_r=g^t_{\test}$, so
$\widehat E^t_r=E^t_r$ by the shared variable $U_{t,r}$.
At every other position,
\eqref{eq:st-base-scores} and \eqref{eq:st-other-zero} give
$
(2E^t_r-1)s_r\ge G/2.
$
Since $U_{t,r}$ is independent of all preceding variables,
$
\Pr(\widehat E^t_r\ne E^t_r\mid A_{t,r})
\le e^{-G/2}.
$
The events
$A_{t,r}\cap\{\widehat E^t_r\ne E^t_r\}$
are disjoint and cover every possible first disagreement.
Hence,
\[
\Pr(\exists t\in[K]:\widehat E^t\ne E^t)
\le pKe^{-G/2}.
\]
Finally, $E^K_{\pi(\out,2)}=Y_K$ by
\eqref{eq:st-state-invariant} and \eqref{eq:st-encode},
so the claimed probability bound follows.

\end{proof}

\subsection{Choosing the Stochastic Inputs}\label{sec:stochastic-inputs}

For each $x\in\{0,1\}^m$, choose
\begin{equation}\label{eq:st-prompts}
 \widehat x=(0,E^0_x)\in\{0,1\}^{p+1}.
\end{equation}
The prompts are distinct and independent of the labels by
\eqref{eq:st-init} and \eqref{eq:st-encode}.
The first generated bit has block position $0$, so the output
of round $K$ occurs at step
\begin{equation}\label{eq:st-horizon}
 T_m=(K-1)p+1+\pi(\out,2)<Kp.
\end{equation}

\begin{proof}[Proof of Theorem~\ref{thm:stochastic}]
Fix $m\ge2$ and put $N=2^m$. For arbitrary labels
Given arbitrary labels $\lambda_0,\ldots,\lambda_{N-1}\in\{0,1\}$, choose $\alpha$ by \eqref{eq:st-code}
and apply Lemma~\ref{lem:st-simulation} with
\begin{equation}\label{eq:st-choose-G}
 G=2\log(2pK/\delta_N)\ge1.
\end{equation}
By \eqref{eq:st-state-invariant} and \eqref{eq:st-horizon},
the lemma's weights and bias satisfy
\begin{equation}\label{eq:st-coupling-gap}
 \left|q^{(T_m)}_{w,b}(\widehat x)-\Pr(Y_K=1)\right|
 \le pKe^{-G/2}=\delta_N/2.
\end{equation}
Lemma~\ref{lem:st-reader} gives, for every $x$,
\begin{equation}\label{eq:st-shattering}
 (2\lambda_{j(x)}-1)
 \left(q^{(T_m)}_{w,b}(\widehat x)-\tfrac12\right)
 \ge\delta_N/2>0.
\end{equation}
Thus $\mathcal H_{p+1,T_m}$ shatters the $2^m$ prompts.
For sufficiently large $d$, choose the largest $m\ge2$ with
\begin{equation}\label{eq:st-choose-m}
 p+1=6144(m+1)+1\le d.
\end{equation}
Prepend zeros to the prompts and set additional weights to zero;
the output probabilities are unchanged. Since $m=\Theta(d)$,
\begin{align}
 \operatorname{VCdim}(\mathcal H_{d,T_m})&\ge2^m=2^{\Omega(d)},
 \label{eq:st-final-vc}\\
 T_m&<6144(m+1)\,4^{2\cdot2^m}\le2^{2^{O(d)}}.
 \label{eq:st-final-horizon}
\end{align}
The horizon depends only on $d$.
\end{proof}

	\section*{Acknowledgments}
	Idan Mehalel is supported by the European Research Council (ERC) under the European Union's Horizon 2022 research and innovation program (grant agreement No. 101041711), the Israel Science Foundation (grant number 2258/19), and the Simons Foundation (as part of the Collaboration on the Mathematical and Scientific Foundations of Deep Learning).
	Ilan Doron-Arad is supported by grant NSF DMS-2031883 and Vannevar Bush Faculty Fellowship ONR-N00014-20-1-2826 (PI Mossel).
	Elchanan Mossel is partially supported by NSF DMS-2031883, Vannevar Bush Faculty Fellowship ONR-N00014-20-1-2826, MURI N000142412742, and a Simons Investigator Award.
	
	\paragraph{AI assistance.}
	GPT Astra found the main proof technique. Iterative work with
	GPT Astra simplified the construction and its presentation. The
	authors take full responsibility for the paper.

\end{document}